\newtheorem{theorem}{Theorem}
\newtheorem{lemma}[theorem]{Lemma}
\theoremstyle{definition}
\newtheorem{definition}{Definition}
\theoremstyle{plain}
\newtheorem{proposition}[theorem]{Proposition}
\theoremstyle{definition}
\newtheorem{example}{Example}
\renewcommand{\phi}{\varphi}
\newcommand{\U}{\mathcal{U}}
\newcommand{\cl}[1]{Cl_{#1}}
\newcommand{\maj}{\mathrm{maj}}
\newcommand{\merge}[3]{\Delta^{#1}_{#2}(#3)} %merge operator
\newcommand{\distance}[2]{H(#1, #2)} %distance from arg1 to arg2
\renewcommand{\L}{{\mathcal L}}
\newcommand{\M}{{\mathcal M}}
\newcommand{\N}{{\mathcal N}}
\newcommand{\Lhorn}{\L_{\mathit{Horn}}}
\newcommand{\Lkrom}{\L_{\mathit{Krom}}}
\renewcommand{\Lkrom}{\L_{\mathit{2CNF}}}
\newcommand{\Llits}{\L_{\mathit{1CNF}}}
\newcommand{\horn}{\mathit{Horn}}
\newcommand{\krom}{\mathit{2CNF}}
\newcommand{\lits}{\mathit{1CNF}}
\renewcommand{\mod}{\mathit{Mod}}
\renewcommand{\sum}{{\Sigma}}
\newcommand{\GMAX}{{\mathit{GMax}}}
\newcommand{\GMIN}{{\mathit{GMin}}}
\newcommand{\GMax}{\GMAX}
\newcommand{\GMin}{\GMIN}
\newcommand{\leximax}{{\mathit{leximax}}}
\newcommand{\leximin}{{\mathit{leximin}}}
\renewcommand{\phi}{\varphi}
\begin{document}
% The file aaai.sty is the style file for AAAI Press 
% proceedings, working notes, and technical reports.
%
\title{Distributing Knowledge into Simple Bases}
 \author{Adrian Haret \and Jean-Guy Mailly \and Stefan Woltran\\
 Institute of Information Systems\\TU Wien, Austria\\
 $\{$haret,jmailly,woltran$\}$@dbai.tuwien.ac.at}

\maketitle
\begin{abstract}
\begin{quote}
  Understanding the behavior of belief change operators for fragments
  of classical logic has received increasing interest over the last
  years.  Results in this direction are mainly concerned with adapting
  representation theorems. However, fragment-driven belief change also
  leads to novel research questions.  In this paper we propose the
  concept of belief distribution, which can be understood as the
  reverse task of merging. More specifically, we are interested in the
  following question: given an arbitrary knowledge base $K$ and some
  merging operator $\Delta$, can we find a profile $E$ and a
  constraint $\mu$, both from a given fragment of classical logic,
  such that $\Delta_\mu(E)$ yields a result equivalent to $K$? In
  other words, we are interested in seeing if $K$ can be distributed
  into knowledge bases of simpler structure, such that the task of
  merging allows for a reconstruction of the original knowledge.  Our
  initial results show that merging based on drastic distance allows
  for an easy distribution of knowledge, while the power of
  distribution for operators based on Hamming distance relies heavily
  on the fragment of choice.
\end{quote}
\end{abstract}

\section{Introduction}

Belief change and belief merging have been topics of interest in
Artificial Intelligence for three decades
\cite{AGM85,KM91,KP02}. However, the restriction of such operators to
specific fragments of propositional logic has received increasing
attention only in the last years
\cite{DelgrandeSTW13,CreignouPPW14,CreignouPRW14,ZhuangP12,ZhuangPZ13,ZhuangP14,DelgrandeP15,HaretRW15}. Mostly,
the question tackled in these works is ``How should rationality
postulates and change operators be adapted to ensure that the result
of belief change belongs to a given fragment?''.  Surprisingly, the
question concerning the extent to which the result of a belief change
operation can deviate from the fragment under consideration has been
neglected so far.  In order to tackle this question, we focus here on
a certain form of reverse merging.  The question is, given an
\emph{arbitrary} knowledge base $K$ and some IC-merging (i.e.\ merging
with integrity constraint, see \cite{KP02}), operator $\Delta$ can we
find a profile $E$, {\em i.e.} a tuple of knowledge bases, and a
constraint $\mu$, both from a given \emph{fragment} of classical
logic, such that $\Delta_\mu(E)$ yields a result equivalent to $K$? In
other words, we are interested in seeing if $K$ can be distributed
into knowledge bases of simpler structure, such that the task of
merging allows for a reconstruction of the original knowledge. We call
this operation {\em knowledge distribution}.

Studying the concept of knowledge distribution can be motivated from
different points of view.  First, consider a scenario where the
storage devices have limited expressibility, for instance, databases
or logic programs. Our analysis will show which merging operators are
required to reconstruct arbitrary knowledge stored in such a set of
limited devices.  Second, distribution can also be understood as a
tool to hide information; only users who know the used merging
operator (which thus acts as an encryption key) are able to faithfully
retrieve the distributed knowledge.  Given the high complexity of
belief change (even for revision in ``simple'' fragments like $\horn$
and $\krom$ \cite{EG92,LS01,CreignouPW13}), brute-force attack to
guess the merging operator is unthinkable. Finally, from the
theoretical perspective our results shed light on the power of
different merging operators when applied to profiles from certain
fragments. In particular, our results show that merging $\lits$
formulas via the Hamming-distance based operator $\Delta^{H,\sum}$
does not need additional care, since the result is guaranteed to stay
in the fragment.

\paragraph{Related Work.}
Previous work on merging in fragments of propositional logic proposed
an adaptation of existing belief merging operators to ensure that the
result of merging belongs to a given fragment \cite{CreignouPRW14}, or
modified the rationality postulates in order to function in the
$\horn$ fragment \cite{HaretRW15}.  Our approach is different, since
we do not require that the result of merging stays in a given
fragment.  On the contrary, we want to decompose arbitrary bases into
a fragment-profile.  Recent work by Liberatore has also addressed a
form of meta-reasoning over belief change operators.  In
\cite{Liberatore:2015}, the input is a profile of knowledge bases with
the expected result of merging $R$, and the aim is to determine the
reliability of the bases (for instance, represented by weights) which
allow the obtaining of $R$.
In another paper,
\citeauthor{Liberatore15}
\shortcite{Liberatore15}
identifies, given a sequence of belief revisions and their results, the
initial pre-order which characterizes the revision operator.  Finally,
even if our approach may seem related to {\em Knowledge Compilation}
(KC) \cite{DarwicheM02,FargierM14,Marquis15}, both methods are in fact
conceptually different. KC aims at modifying a knowledge base $K$ into
a knowledge base $K'$ such that the most important queries for a given
application (consistency checking, clausal entailment, model
counting, $\dots$) are simpler to solve with $K'$. Here, we are interested
in the extent to which it is possible to \emph{equivalently represent} an arbitrary knowledge base
by simpler fragments when using merging as a recovery operation. 

\paragraph{Main Contributions.}
We formally introduce the concept of knowledge distributability, as well
as a restricted version of it where the profile is limited to a single
knowledge base (simplifiability). We show that for drastic distance
arbitrary knowledge can be distributed into bases restricted to mostly any kind of
fragment, while simplifiability is limited to trivial cases. On the
other hand, for Hamming-distance based merging the picture is more
opaque.  We show that for $\lits$, distributability w.r.t.\
$\Delta^{H,\sum}$ is limited to trivial cases, while slightly more can
be done with $\Delta^{H,\GMIN}$ and $\Delta^{H,\GMAX}$.  For $\krom$
we show that arbitrary knowledge can be distributed and even be
simplified.  Finally, we discuss the $\horn$ fragment for which the
results for $\Delta^{H,\sum}$, $\Delta^{H,\GMIN}$ and
$\Delta^{H,\GMAX}$ are situated in between the two former fragments.

\section{Background}

\paragraph{Fragments of Propositional Logic.}
We consider $\L$ as the language of propositional logic
over some fixed alphabet $\U$ of propositional atoms.
We use standard connectives $\vee$, $\wedge$, $\neg$,
and constants $\top$, $\bot$. A clause is a disjunction of literals.
A clause is called \emph{Horn} if at most one of its literals is positive.
An interpretation is a set of atoms (those set to true). 
The set of all interpretations is $2^{\U}$.
Models of a formula $\phi$ are denoted by $\mod(\phi)$. A knowledge base (KB) is a finite set of formulas and we identify models of a KB $K$ via
$\mod(K)=\bigcap_{\phi\in K}\mod(\phi)$. A profile is a finite non-empty tuple of KBs.
Two formulae $\phi_1, \phi_2$ (resp. KBs $K_1,K_2$) are equivalent, denoted $\phi_1 \equiv \phi_2$ (resp. $K_1 \equiv K_2$), when they have the same set of models.

We use a rather general and abstract notion of fragments.

\begin{definition}
A mapping $\cl{}: 2^{2^\U}\longrightarrow 2^{2^\U}$ is called \emph{closure-operator}
if it satisfies the following 
for any $\M,\N\subseteq 2^\U$:
\begin{itemize}
  \item If $\M\subseteq \N$, then
$\cl{}(\M)\subseteq\cl{}(\N)$ % (monotonicity);
\item If $\vert \M\vert =1$, then $\cl{}(\M)=\M$ 
\item $\cl{}(\emptyset)=\emptyset$.
\end{itemize}
\end{definition}

\begin{definition}
$\L'\subseteq \L$ is called a \emph{fragment} if 
it is closed under conjunction (i.e.,
$\phi \wedge \psi \in \L'$
for any $\phi, \psi \in \L'$),
and there exists an associated closure-operator $\cl{}$ such that 
(1) for all $\psi \in \L'$, $\mod(\psi) = \cl{}(\mod(\psi))$ and
(2) for all $\M\subseteq 2^\U$ 
there is a $\psi \in \L'$  with $\mod(\psi)= \cl{}(\M)$.
We often denote the closure-operator $\cl{}$ associated to a fragment $\L'$ as 
$\cl{\L'}$.
\end{definition}

\begin{definition}
For a fragment $\L'$, we call 
a finite set $K\subseteq \L'$ an
$\L'$-knowledge base. %any set of formulae $K$ such that $\cl{\L'}(\mod(K)) = \mod(K)$, and 
%An $\L'$-profile is a finite tuple of $\L'$-knowledge bases.
An $\L'$-profile is a profile over $\L'$-knowledge bases.
A KB $K'\subseteq \L$ is called $\L'$-\emph{expressible} if there exists an $\L'$-KB $K$,
such that $K'\equiv K$.
\end{definition}

Many well known fragments of propositional logic 
are indeed captured by our notion. 
For the Horn-fragment $\Lhorn$, i.e.\ the set of all conjunctions of Horn clauses over $\U$,
take
the operator
$\cl{\Lhorn}$ defined as the fixed point of the function %$\cl{\Lhorn}^1$ defined by
$$
\cl{\Lhorn}^1(\M) = %\M \cup 
\{ \omega_1\cap \omega_2 \mid \omega_1,\omega_2\in \M \}.
$$
The fragment $\Lkrom$ which is restricted to formulas over clauses
of length at most $2$ is linked to the operator
$\cl{\Lkrom}$ defined as the fixed point of the function $\cl{\Lkrom}^1$ given by
$$
\cl{\Lkrom}^1(\M) = %\M \cup 
\{  \maj_3(\omega_1,\omega_2,\omega_3) \mid \omega_1,\omega_2,\omega_3\in \M \}.
$$
Here, we use the ternary majority function $\maj_3(\omega_1,\omega_2,\omega_3)$ which yields an interpretation 
containing those atoms which are true in at least two out of $\omega_1,\omega_2,\omega_3$.
Finally, we are also interested in the $\Llits$ fragment which is just composed of conjunctions of literals;
its associated operator $\cl{\Llits}$ is defined as the fixed point of the function
%$\cl{\Llits}^1$ defined by
\begin{eqnarray*}
\cl{\Llits}^1(\M) & = & %\M \cup 
\{ \omega_1\cap \omega_2,\omega_1\cup  \omega_2 \mid \omega_1,\omega_2\in \M \} \cup\\
&& \{ \omega_3 \mid \omega_1\subseteq \omega_3 \subseteq \omega_2; \omega_1,\omega_2\in\M\}.
\end{eqnarray*}
Note that full classical logic is given via the identity closure operator $\cl{\L}(\M)=\M$.

\paragraph{Merging Operators.} 
%{\em Belief merging} is useful when
%several agents, each of them using its own belief base, must define a
%single belief base which expresses the beliefs of the group. In
%particular, we are interested in 
We focus on {\em IC-merging}, %it is supposed here
%Formally, IC-merging
%operators are mappings from 
where a profile 
is mapped into a KB, such that
the result %of the merging must 
satisfies some integrity
constraint. %$\mu\in\L $. %which is expressed as a propositional formula, and
%that represents some strong constraints for the result of the process
%(given by some physical laws, legal laws, etc.). 
%Rationality 
Postulates for IC-merging have been stated
in \cite{KP02}.
%Here, we are interested in a 
We recall a specific family of  IC-merging operators,
based on distances between  interpretations, see also 
\cite{KLM04}.

\begin{definition}
A distance between interpretations is a mapping $d$ from two
interpretations to a non-negative real number, %which satisfies, 
such that for
all $\omega_1,\omega_2,\omega_3\subseteq \U$,
(1) $d(\omega_1,\omega_2) = 0$ iff $\omega_1 = \omega_2$;
(2) $d(\omega_1,\omega_2) = d(\omega_2,\omega_1)$; and
(3) $d(\omega_1,\omega_2) +
    d(\omega_2,\omega_3) \geq d(\omega_1,\omega_3)$.
%\begin{description}
%  \item[coincidence] $d(\omega_1,\omega_2) = 0$ iff $\omega_1 =
%    \omega_2$;
%  \item[symmetry] $d(\omega_1,\omega_2) = d(\omega_2,\omega_1)$;
%  \item[triangular inequality] $d(\omega_1,\omega_2) +
%    d(\omega_2,\omega_3) \geq d(\omega_1,\omega_3)$.
%\end{description}
We will use two specific distances:
\begin{description}
  \item[drastic distance] $D(\omega_1,\omega_2) = 1$ if $\omega_1 =
    \omega_2$, $0$ otherwise;
  \item[Hamming distance] $H(\omega_1,\omega_2) = |(\omega_1 \setminus
    \omega_2) \cup (\omega_2 \setminus \omega_1)|$.
\end{description}
\end{definition}

%These distances are used to quantify the difference between the models
%of a belief base with other belief bases. 
We overload the previous notations to define the distance between 
an interpretation $\omega$ and a KB $K$: if $d$ is a distance between interpretations,
then
\[
d(\omega,K) = \min_{\omega' \in \mod(K)}d(\omega,\omega').
\]
Next, an aggregation function must be used to
evaluate the distance between an interpretation and a profile.

\begin{definition}\label{def:aggregation-function}
An aggregation function %is a function 
$\otimes$ 
%associating 
associates a
non-negative number to every finite tuple of non-negative numbers,
such that:
%\begin{description}
\begin{enumerate}
  \item %[non-decreasingness] 
If $x \leq y$, then
    $\otimes(x_1,\dots,x,\dots,x_n) \leq \otimes(x_1,\dots,y,\dots,x_n)$;
  \item %[minimality] 
$\otimes(x_1,\dots,x_n) = 0$ iff $x_1 =\dots = x_n
    = 0$;
  \item % [identity] 
For every non-negative number $x$, $\otimes(x) = x$.
\end{enumerate}
%\end{description}
As aggregation functions, we will consider the sum $\sum$, and $\GMax$
and $\GMin$\footnote{$\GMax$ and $\GMin$ are also known as $\leximax$ and $\leximin$ respectively. Stricto sensu, these functions return a vector of numbers, and not a single number. However, $\GMax$ (resp. $\GMin$) can be associated with an aggregation function as defined in Definition~\ref{def:aggregation-function} which yields the same vector ordering than $\GMax$ (resp. $\GMin$). We do a slight abuse by using directly $\GMax$ and $\GMin$ as the names of aggregation functions. See \cite{KLM02}.}, defined as follows. Given a profile $(K_1,\dots,K_n)$,
let $V_\omega = (d_1^\omega,\dots,d_n^\omega)$ be the vector of
distances s.t. $d_i^\omega =
d(\omega,K_i)$. $\GMax(d_1^\omega,\dots,d_n^\omega)$
(resp. $\GMin(d_1^\omega,\dots,d_n^\omega)$) is defined by ordering
$V_\omega$ in decreasing (resp. increasing) order. Given two
interpretations $\omega_1,\omega_2$, $\GMax(d_1^{\omega_1},\dots,d_n^{\omega_1})
\leq \GMax(d_1^{\omega_2},\dots,d_n^{\omega_2})$
(resp. $\GMin(d_1^{\omega_1},\dots,d_n^{\omega_1}) \leq
\GMin(d_1^{\omega_2},\dots,d_n^{\omega_2})$) is defined by comparing them
w.r.t. the lexicographic ordering.
\end{definition}

%Once more, we overload the notation of distances. 
Finally,
let $d$ be a distance, $\omega$ an interpretation  
%between $\omega\subseteq \U$ 
and $E = (K_1,\dots,K_n)$ a profile. Then,
\[
d^\otimes(\omega,E) = \otimes(d(\omega,K_1),\dots,d(\omega,K_n)).
\]
If there is no ambiguity about the aggregation function $\otimes$, we write $d(\omega,E)$ instead of $d^\otimes(\omega,E)$.

%This leads to the
%definition of distance-based IC-merging operators.

\begin{definition}\label{def:icm} %[\cite{KP02}]
For any distance $d$ between interpretations, and any aggregation
function $\otimes$, the merging operator $\Delta^{d,\otimes}$ is a
  mapping from a profile $E$ and a formula $\mu$ to a KB, such that
\[
\mod(\Delta^{d,\otimes}_\mu(E))=\min(\mod(\mu),\leq_E^{d,\otimes}),
\]
with $\omega_1 \leq_E^{d,\otimes} \omega_2$ iff $d^\otimes(\omega_1,E) \leq d^\otimes(\omega_2,E)$.
%  $\otimes_{K \in P}
% (d(\omega_1,K)) \leq \otimes_{K \in P} (d(\omega_2,K))$.
\end{definition}

When we consider a profile containing a single knowledge base $K$, all aggregation functions are equivalent; we write $\Delta_\mu^{d}(K)$ instead of $\Delta_\mu^{d,\otimes}((K))$ for readability.
For drastic distance, $\GMIN$, $\GMAX$, and $\sum$ are equivalent for arbitrary profiles. Thus, 
whenever we show results
for $\Delta^{D,\sum}$, these carry over to
$\Delta^{D,\GMIN}$ and
$\Delta^{D,\GMAX}$.

\section{Main Concepts and General Results}

We now give the central definition for a knowledge base being distributable 
into a profile from a certain fragment with respect to a given merging operator.

\begin{definition}
Let $\Delta$ be a merging operator, $K\subseteq \L$ be an arbitrary KB, and $\L'$ be a fragment.
$K$ is called $\L'$-\emph{distributable} w.r.t.\ $\Delta$  
if there exists an $\L'$-profile $E$ and a formula $\mu\in\L'$, 
such that $\Delta_\mu(E) \equiv K$.
\end{definition}

\begin{example}\label{ex:1}
Let $\U=\{a,b\}$ and consider 
$K=\{ a\vee b \}$ which we want to
check for $\Lhorn$-distributability w.r.t.\ operator $\Delta^{H,\sum}$.
We have $\mod(K)=\{ \{a\},\{b\},\{a,b\}\}$, thus $K$ is not 
%equivalent to any KB from 
$\Lhorn$-expressible
(note that 
$\cl{\Lhorn}(\mod(K))=\{\emptyset,\{a\},\{b\},\{a,b\}\}\neq\mod(K)$),
otherwise $K$ would be distributable in a simple way (see Proposition~\ref{prop:clear} below).

Take the $\Lhorn$-profile 
$E=(K_1,K_2)$ with 
$K_1 = \{ a \wedge b \}$,
$K_2 = \{ \neg a \vee \neg b \}$,
%$E=(K_1,K_2,K_3,K_4)$ with 
%$K_1 = \{ a \wedge \neg b \}$,
%$K_2 = \{ b \wedge \neg a \}$,
%$K_3 = \{ a \wedge b \}$ and 
%$K_4 = \{ \neg a \vee \neg b \}$ 
together with the empty constraint $\mu=a \vee \neg a$.
We have 
$\mod(K_1)=\{\{a,b\}\}$,
$\mod(K_2)=\{\{a\},\{b\},\emptyset\}$. 
%$\mod(K_1)=\{\{a\}\}$,
%$\mod(K_2)=\{\{b\}\}$,
%$\mod(K_3)=\{\{a,b\}\}$,
%$\mod(K_4)=\{\{a\},\{b\},\emptyset\}$. 
%Thus, we obtain 
In the following matrix, each line corresponds to the distance between a model of $\mu$ and a KB from the profile $E$ (columns $K_1$ and $K_2$), or between a model of $\mu$ and the profile 
using the sum-aggregation over the distances to the single KBs
(column $\sum$).
%[{\bf SW:} at some point we should explain how to read that.]
\[
\begin{array}{clllll}
& K_1 & K_2  & \sum\\
\{a,b\}	 	        &	0	&		1   &  1\\
\{a\}           	&	1	&		0   &  1\\
\{b\}           	&	1	&		0   &  1\\
\emptyset             	&	2	&		0   &  2\\
\end{array}
\]
%\[
%\begin{array}{llllll}
%& K_1 & K_2 & K_3 & K_4 & \sum\\
%\{a\}           	&	0	&		2   & 1   & 0   & 3\\
%\{b\}           	&	2	&		0   & 1   & 0   & 3\\
%\{a,b\}	 	        &	1	&		1   & 0   & 1   & 3\\
%\emptyset             	&	1	&		1   & 2   & 0   & 4\\
%\end{array}
%\]
We observe that $\mod(\Delta^{H,\sum}_\mu(E))=\{ \{a\},\{b\},\{a,b\}\}$, thus
$\Delta^{H,\sum}_\mu(E) \equiv K$ as desired.
It is easily checked that also other aggregations work: 
%i.e.\
$\Delta^{H,\GMAX}_\mu(E)  \equiv
\Delta^{H,\GMIN}_\mu(E) 
\equiv K$.
\hfill$\diamond$
\end{example}

Next, we recall that IC-merging of a single KB yields revision. 
Thus, the concept we introduce next is also of interest, as it represents
a certain form of reverse revision.

\begin{definition}
Let $\Delta$ be a merging operator, $K\subseteq \L$ an arbitrary KB, 
and $\L'$ a fragment.
$K$ is called $\L'$-\emph{simplifiable} w.r.t.\ $\Delta$  
if there exists an $\L'$-KB $K'$  and $\mu\in\L'$, 
such that $\Delta_\mu(K') \equiv K$.
\end{definition}

As we will see later, 
the KB $K$ from Example~\ref{ex:1} cannot be $\Lhorn$-simplified w.r.t.\ $\Delta^{H}$; in other words,
we need here at least two KBs to ``express'' $K$.
However, it is rather straightforward that any $\L'$-expressible KB can be $\L'$-simplified. 
%The next results concern simplifiabilty. The first one 
%The following result is obvious and holds
%for any distance-based merging operator.

\begin{proposition}\label{prop:clear}
For every fragment $\L'$ and
every KB $K$,
it holds that $K$ is $\L'$-simplifiable (and thus also $\L'$-distributable) w.r.t.\ $\Delta$,
whenever $K$ is $\L'$-expressible.
\end{proposition}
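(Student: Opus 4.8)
The plan is to turn the expressibility witness directly into a simplification by using that KB both as the single base of the profile and (via its conjunction) as the integrity constraint. First I would unpack the hypothesis: since $K$ is $\L'$-expressible, there is an $\L'$-KB $K'$ with $K'\equiv K$, i.e.\ $\mod(K')=\mod(K)$. Because $\L'$ is closed under conjunction, the conjunction of the finitely many formulas of $K'$ is again a formula $\mu\in\L'$ with $\mod(\mu)=\mod(K')$ (the degenerate empty base being handled by the constant $\top$). I take this $\mu$ as the constraint and the length-one profile $(K')$ as the profile.

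Next I would evaluate $\Delta_\mu(K')$ straight from Definition~\ref{def:icm}. For a one-base profile, property~(3) of aggregation functions gives $d^\otimes(\omega,(K'))=d(\omega,K')$ for every interpretation $\omega$, so the pre-order $\leq_{(K')}^{d,\otimes}$ ranks interpretations purely by distance to $K'$. The crucial point is that every $\omega\in\mod(\mu)=\mod(K')$ satisfies $d(\omega,K')=0$, which by axiom~(1) for distances is the global minimum; hence all models of $\mu$ are $\leq$-minimal within $\mod(\mu)$. Therefore $\mod(\Delta_\mu(K'))=\mod(\mu)=\mod(K')=\mod(K)$, i.e.\ $\Delta_\mu(K')\equiv K$. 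This exhibits an $\L'$-KB and an $\L'$-constraint witnessing $\L'$-simplifiability, and the same length-one profile witnesses $\L'$-distributability.

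I do not expect a genuine obstacle: the content is simply that merging a base against the constraint carved out by that very base returns the base. The only care needed is bookkeeping rather than difficulty---checking that $\mu$ really lands in $\L'$ (exactly what closure under conjunction provides) and that the inconsistent case $\mod(K)=\emptyset$ causes no trouble, since then $\mu$ is unsatisfiable, $\mod(\mu)=\emptyset$, and $\min(\emptyset,\leq_{(K')}^{d,\otimes})=\emptyset=\mod(K)$ still holds. Since the argument invokes only axiom~(1) for $d$ and property~(3) for $\otimes$, the conclusion holds uniformly for every distance-based operator $\Delta^{d,\otimes}$.
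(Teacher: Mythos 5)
Your proposal is correct and follows exactly the paper's own argument: take the equivalent $\L'$-KB $K'$ as the singleton profile and its conjunction as the constraint $\mu\in\L'$, so that $\Delta_\mu(K')\equiv K$. The only difference is that you spell out the verification (distance $0$ is minimal, hence all models of $\mu$ survive, including the degenerate inconsistent case) which the paper dismisses as ``easily verified.''
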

\begin{proof}
Let $K'$ be an $\L'$-KB equivalent to $K$,
and let $\mu=(\bigwedge_{\phi \in K'} \phi)$. 
Thus, $\mu \in\L'$ by definition of fragments and it is easily verified
that $\Delta_\mu(K') \equiv K$.
\end{proof}

%So far 
%In Example~\ref{ex:1}
%we used $\mu=\cl{\L'}(K)$
%proves enough 
Next, we show that 
in order
to determine whether a KB $K$ is $\L'$-distributable, 
it is sufficient to consider constraints $\mu$ such that %$\mu$, such that
$\mod(\mu)=\cl{\L'}(\mod(K))$.

\begin{proposition}\label{prop:mu}
  Let $K \in \L$ be a KB, $\L'$ be a
  fragment, $E$ an $\L'$-profile and $\mu\in\L'$.  Then 
	$\Delta_\mu(E)\equiv K$ 
	implies 
	$\Delta_{\mu'}(E)\equiv K$ for any $\mu'$ such that
 	$\mod(\mu') = \cl{\L'}(\mod(K))$.
%If $K$ is $\L'$-distributable
%  (resp. $\L'$-simplifiable) w.r.t.\ merging operator $\Delta$ and 
%  constraint $\mu_1 \in \L'$, then $K$ is $\L'$-distributable
%  (resp. $\L'$-simplifiable) w.r.t.\ $\Delta$ and $\mu_2$
%  s.t. $\mod(\mu_2) = \cl{\L'}(\mod(K))$.
\end{proposition}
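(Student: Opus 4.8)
The plan is to argue entirely at the level of model sets, exploiting that $\Delta^{d,\otimes}_\mu(E)$ picks out, among the models of the constraint, exactly those minimizing the fixed real-valued function $f(\omega)=d^\otimes(\omega,E)$, which depends only on $E$ and not on the constraint. Write $\M=\mod(K)$ and $\mathcal{C}=\cl{\L'}(\M)=\mod(\mu')$. Since the preorder $\leq_E^{d,\otimes}$ is total, with $\omega_1\leq_E^{d,\otimes}\omega_2$ iff $f(\omega_1)\leq f(\omega_2)$, the hypothesis $\Delta_\mu(E)\equiv K$ says precisely that $\M=\{\omega\in\mod(\mu):f(\omega)=v^*\}$, where $v^*=\min_{\omega\in\mod(\mu)}f(\omega)$ (the minimum is attained because $\mod(\mu)$ is finite). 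The goal then reduces to showing $\min(\mathcal{C},\leq_E^{d,\otimes})=\M$, i.e.\ that replacing the admissible region $\mod(\mu)$ by $\mathcal{C}$ selects the same minimizers.

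The crux is the chain of inclusions $\M\subseteq\mathcal{C}\subseteq\mod(\mu)$. For the first inclusion I would establish that $\cl{\L'}$ is extensive, which is \emph{not} assumed directly but follows from the axioms: for any $\omega\in\M$ the singleton condition gives $\cl{\L'}(\{\omega\})=\{\omega\}$, and monotonicity applied to $\{\omega\}\subseteq\M$ yields $\{\omega\}=\cl{\L'}(\{\omega\})\subseteq\cl{\L'}(\M)=\mathcal{C}$, whence $\M\subseteq\mathcal{C}$. For the second inclusion I would use that $\mu\in\L'$: by fragment property~(1) its model set is closed, $\mod(\mu)=\cl{\L'}(\mod(\mu))$, and since $\M\subseteq\mod(\mu)$, monotonicity gives $\mathcal{C}=\cl{\L'}(\M)\subseteq\cl{\L'}(\mod(\mu))=\mod(\mu)$.

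With the sandwich $\M\subseteq\mathcal{C}\subseteq\mod(\mu)$ in hand the conclusion is immediate. Every $\omega\in\mathcal{C}$ is a model of $\mu$, so $f(\omega)\geq v^*$, while the elements of $\M\subseteq\mathcal{C}$ attain $f=v^*$; hence the minimum of $f$ over $\mathcal{C}$ equals $v^*$ and $\min(\mathcal{C},\leq_E^{d,\otimes})=\{\omega\in\mathcal{C}:f(\omega)=v^*\}$. This set coincides with $\M$: it contains $\M$, and conversely any $\omega\in\mathcal{C}$ with $f(\omega)=v^*$ is, by the second inclusion, a model of $\mu$ attaining the minimum over $\mod(\mu)$, hence already lies in $\M$. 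Therefore $\mod(\Delta_{\mu'}(E))=\M=\mod(K)$, i.e.\ $\Delta_{\mu'}(E)\equiv K$. The only genuinely non-routine point, and the step I expect to be the main obstacle, is recognizing that both inclusions are needed and that the second one hinges on the constraint being drawn from the fragment, so that shrinking $\mod(\mu)$ to its closed subset $\mathcal{C}$ never discards a minimizer nor admits a new one.
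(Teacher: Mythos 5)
Your proof is correct and follows essentially the same route as the paper's: establish the sandwich $\mod(K)\subseteq\mod(\mu')\subseteq\mod(\mu)$ (the second inclusion via closedness of $\mod(\mu)$ and monotonicity of $\cl{\L'}$), then observe that minimization over the intermediate set yields the same minimizers. You merely spell out two steps the paper leaves implicit, namely deriving extensivity on $\mod(K)$ from the singleton and monotonicity axioms, and the final argument that the minimizer sets coincide.
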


\begin{proof}
%Let $\mu, \mu' \in \L'$ be two constraints such that $\mod(\mu') =
%\cl{\L'}(\mod(K))$ and $\mod(\mu) \neq \mod(\mu')$. We suppose that
%the $K$ is $\L'$-distributable w.r.t. the operator $\Delta$
%and $\mu$, and that $K$ is not $\L'$-distributable w.r.t. $\Delta$ and
%$\mu'$.
%
Let $\Delta=\Delta^{d,\otimes}$.
By Definition~\ref{def:icm},
$\mod(K) = \min(\mod(\mu),\leq^{d,\otimes}_E)$,
hence
$\mod(K) \subseteq \mod(\mu)$.
%IC-merging operator. 
Moreover, $\mu$ is $\L'$-closed, so 
$\cl{\L'}(\mod(K))=\mod(\mu')
\subseteq \mod(\mu)$.  
%Our assumption is equivalent to: $\exists E$ an
%$\L'$-profile such that $\Delta_{\mu}(E) \equiv K$, and $\forall E,
%\Delta_{\mu'}(E) \not\equiv K$. Stated otherwise, 
We get $\mod(K)\subseteq \mod(\mu') \subseteq \mod(\mu)$.
Thus,
$\mod(K) = \min(\mod(\mu'),\leq^{d,\otimes}_E)$,
i.e.\
	$\Delta_{\mu'}(E)\equiv K$.
%with $\leq_E$ the pre-order associated with
%the profile $E$ by the syncretic assignment which characterizes
%$\Delta$, and $\mod(K) \neq \min(\mod(\mu'),\leq_E)$. From this, we
%obtain $\exists \omega \in \mod(\mu') \setminus \mod(K)$ such that,
%$\exists \omega' \in \mod(K), \omega <_E \omega'$. From $\mod(\mu')
%\subseteq \mod(\mu)$, $\omega \in \mod(\mu)$, so with $\mod(K) =
%\min(\mod(\mu),\leq_E)$ this leads to a contradiction.
\end{proof}

Next, we give two positive results for distributing knowledge
in any fragment. The key idea is to use KBs in the profile 
which have exactly one model (our notion of fragment guarantees existence 
of such KBs). The first result is independent of the distance notion but requires
$\GMIN$ as the aggregation function. The second result is for drastic distance and  thus
works for any of the aggregation functions we consider.

\begin{theorem}\label{thm:gmin}
Let $d$ be a distance and 
$\L'$  be a fragment.
Then for
every KB $K$, such that for
  all distinct $\omega_1,\omega_2 \in \mod(K)$, $d(\omega_1,\omega_2)=e$ for some 
  $e>0$, it holds that $K$ is $\L'$-distributable w.r.t. $\Delta^{d,\GMin}.$
\end{theorem}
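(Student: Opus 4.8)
The plan is to realize $\mod(K)$ as the set of $\GMin$-minimal models of a suitable constraint, using a profile built from single-model knowledge bases. Write $\mod(K)=\{\omega_1,\dots,\omega_m\}$ and assume $m\geq 1$ (if $\mod(K)=\emptyset$, then $K$ is distributable by taking any $\L'$-profile together with an inconsistent $\mu\in\L'$, which exists since property (2) of fragments applied to $\M=\emptyset$ yields a $\psi\in\L'$ with $\mod(\psi)=\cl{\L'}(\emptyset)=\emptyset$). By the fragment axioms every singleton $\{\omega_i\}$ is $\L'$-closed, as $\cl{\L'}(\{\omega_i\})=\{\omega_i\}$, so there is an $\L'$-KB $K_i$ with $\mod(K_i)=\{\omega_i\}$. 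I would then set $E=(K_1,\dots,K_m)$ and choose $\mu\in\L'$ with $\mod(\mu)=\cl{\L'}(\mod(K))$, which exists again by property (2) and is the natural constraint suggested by Proposition~\ref{prop:mu}.

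Next I would compute the aggregated distances. Since $\mod(K_i)=\{\omega_i\}$, we have $d(\omega,K_i)=d(\omega,\omega_i)$ for every $\omega$, so the distance vector of $\omega$ is $(d(\omega,\omega_1),\dots,d(\omega,\omega_m))$. For a model $\omega_j\in\mod(K)$ the entry $d(\omega_j,\omega_j)=0$, while $d(\omega_j,\omega_i)=e$ for all $i\neq j$ by the equidistance hypothesis; hence its sorted vector is $(0,e,\dots,e)$, the \emph{same} for every $j$. For any $\omega\in\mod(\mu)\setminus\mod(K)$ we have $\omega\neq\omega_i$ for all $i$, so $d(\omega,\omega_i)>0$ by the first distance axiom, and the smallest entry of its sorted vector is strictly positive.

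It then remains to compare these vectors lexicographically. Each model of $K$ attains the common value $(0,e,\dots,e)$, whose first coordinate is $0$, whereas every $\omega\in\mod(\mu)\setminus\mod(K)$ has a sorted vector whose first coordinate is positive; thus $\omega_j <_E^{d,\GMin}\omega$ for all such pairs, while all models of $K$ are mutually tied under $\leq_E^{d,\GMin}$. Since $\mod(K)\subseteq\cl{\L'}(\mod(K))=\mod(\mu)$, where extensivity of $\cl{\L'}$ follows from monotonicity together with $\cl{\L'}(\{\omega\})=\{\omega\}$, the models of $K$ all lie in $\mod(\mu)$ and are exactly its $\leq_E^{d,\GMin}$-minimal elements. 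Hence $\min(\mod(\mu),\leq_E^{d,\GMin})=\mod(K)$, i.e.\ $\Delta^{d,\GMin}_\mu(E)\equiv K$.

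The conceptual heart of the argument, and the step I expect to require the most care, is recognizing precisely why the equidistance hypothesis is used: it is exactly the condition forcing all models of $K$ to receive the identical sorted distance vector $(0,e,\dots,e)$, so that none of them is lexicographically eliminated and the whole of $\mod(K)$ survives the $\GMin$-minimization. The leading $0$, coming from each model's zero self-distance, is what separates genuine models from the spurious points in $\cl{\L'}(\mod(K))\setminus\mod(K)$ introduced by closing under the fragment. Note that the argument invokes only the first distance axiom together with the equidistance assumption, so it is genuinely independent of the particular distance $d$.
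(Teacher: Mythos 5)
Your proof is correct and follows essentially the same route as the paper: a profile of single-model $\L'$-KBs, one per model of $K$, with constraint $\mu$ chosen so that $\mod(\mu)=\cl{\L'}(\mod(K))$, so that every model of $K$ receives the sorted vector $(0,e,\dots,e)$ while every spurious interpretation in $\cl{\L'}(\mod(K))\setminus\mod(K)$ has a strictly positive first coordinate and is thus eliminated lexicographically. Your write-up is in fact more careful than the paper's terse argument, notably in handling the degenerate case $\mod(K)=\emptyset$ (where the paper's construction would produce an empty profile, which the definition of profile forbids) and in verifying $\mod(K)\subseteq\mod(\mu)$ via monotonicity and singleton closure.
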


\begin{proof}
  Build the $\L'$-profile $E$ such that for each $\omega \in \mod(K)$,
  there is a KB with $\omega$ as its only model.  Thus
  all models of $K$ get a $\GMIN$-vector $(0,e,e,e,e,\ldots)$.  All
  interpretations from $\cl{\L'}(\mod(K))\setminus \mod(K)$ get a
  vector $(f,g,\ldots)$ with $f>0$. Hence, we have
  $\min(\mod(\mu),\leq_E^{d,\GMin}) = \mod(K)$ using $\mu\in\L'$ with
$\mod(\mu) = \cl{\L'}(\mod(K))$.
\end{proof}

% Observation:
% With (H,GMin)-operator we can do some non-trivial 
% distribution for the 1-CNF fragment, which we expect 
% impossible for (H,\Sigma) -- and also (H,GMax), see below.

% Remark:
% Concerning the 1-CNF fragment, I think above result 
% cannot be improved. We maybe can show this similar to 
% our \Delta^{H,\Sigma} proof-plan; the critical 
% tuples with only two models do not apply anymore.
% We should however check this only after \Delta^{H,\Sigma}
% is fully settled.

\begin{theorem}
For every fragment $\L'$ and 
every knowledge base $K$, 
it holds that $K$ is $\L'$-distributable w.r.t.\ $\Delta^{D,\oplus}$, for $\oplus \in \{\sum,\GMin,\GMax\}$.
\end{theorem}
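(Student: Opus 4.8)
The plan is to obtain this as an immediate consequence of Theorem~\ref{thm:gmin}, exploiting the special shape of the drastic distance. The starting observation is that $D$ assigns distance $1$ to every pair of distinct interpretations, so for any consistent $K$ the hypothesis of Theorem~\ref{thm:gmin} is automatically satisfied with $e=1$. This already yields $\L'$-distributability w.r.t.\ $\Delta^{D,\GMin}$. To cover $\sum$ and $\GMax$ as well, I would either invoke the earlier remark that the three aggregators coincide under the drastic distance, or, preferably, verify all three uniformly via a single explicit construction.

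For the direct argument I would, given a consistent $K$ with $\mod(K)=\{\omega_1,\dots,\omega_m\}$, build the $\L'$-profile $E=(K_1,\dots,K_m)$ with $\mod(K_i)=\{\omega_i\}$; such KBs exist because $\cl{\L'}$ is the identity on singletons. As constraint I would take $\mu\in\L'$ with $\mod(\mu)=\cl{\L'}(\mod(K))$, which exists by the defining property of fragments. The key computation is then that, under $D$, each model $\omega_i$ sits at distance $0$ from $K_i$ and at distance $1$ from every other base, whereas each $\omega'\in\cl{\L'}(\mod(K))\setminus\mod(K)$ is at distance $1$ from all $m$ bases. From here the three aggregators behave the same way: for $\sum$ the models of $K$ score $m-1$ while the extra interpretations score $m$; for $\GMin$ the vectors are $(0,1,\dots,1)$ versus $(1,\dots,1)$; for $\GMax$ they are $(1,\dots,1,0)$ versus $(1,\dots,1)$. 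In each case the models of $K$ are strictly preferred within $\mod(\mu)$ and mutually indistinguishable, so $\min(\mod(\mu),\leq_E^{D,\oplus})=\mod(K)$, giving $\Delta^{D,\oplus}_\mu(E)\equiv K$.

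I expect no genuinely hard step here, since the result is essentially a corollary of Theorem~\ref{thm:gmin} plus the distance-$1$ uniformity of $D$. The only points requiring care are (i) the degenerate case of an inconsistent $K$, which the singleton construction does not handle (it would produce an empty profile), but which follows directly from Proposition~\ref{prop:clear}, as $\cl{\L'}(\emptyset)=\emptyset$ forces $\bot\in\L'$ and hence makes every inconsistent $K$ trivially $\L'$-expressible; and (ii) confirming that the interpretations added by the closure $\cl{\L'}$ are strictly worse than the models of $K$ under each of $\sum$, $\GMin$, and $\GMax$ — which the computation above makes routine.
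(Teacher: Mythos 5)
Your proposal is correct and follows essentially the same construction as the paper's own proof: a profile of singleton-model $\L'$-KBs, one per model of $K$, with constraint $\mu$ satisfying $\mod(\mu)=\cl{\L'}(\mod(K))$, and the same distance computation showing models of $K$ score strictly better than the closure's extra interpretations under all of $\sum$, $\GMin$, and $\GMax$. Your two additions --- deriving the $\GMin$ case from Theorem~\ref{thm:gmin} and handling inconsistent $K$ (where the singleton construction would yield an empty profile) via Proposition~\ref{prop:clear} --- are sound refinements of an edge case the paper's proof silently ignores.
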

\begin{proof}
  Given a fragment $\L'$, we take $E=\{ K_\omega \mid \omega\in
  \mod(K) \}$ where $K_\omega\in \L'$ is a knowledge base with single
  model $\omega$ (such $K_\omega\in \L'$ exists due to our definition
  of fragments), and let $\mu$ be such that
  $\mod(\mu)=\cl{\L'}(\mod(K))$; hence also $\mu\in\L'$.  Let
  $\omega'\in\mod(\mu)$ and $n=|\mod(K)|$, we observe that
  $\sum_{K_\omega \in E} H(\omega',K_\omega) = n-1$ when $\omega'\in
  \mod(K)$, and $n$ otherwise.  Thus, $\Delta_\mu^{D,\sum}(E) \equiv
  K$. The same result holds for $\Delta_\mu^{D,\GMax}$ and $\Delta_\mu^{D,\GMin}$.
\end{proof}

%The next results concern simplifiabilty. The first one is obvious and holds
%for any distance-based merging operator.
%
%
%However, for drastic distance we cannot do better.

% The proof is similar to the one of Theorem~\ref{thm:gmin} and thus omitted.
%
Concerning simplifiability w.r.t.\ drastic distance based operators, %it turns out that 
Proposition
\ref{prop:clear} cannot be improved.

\begin{theorem}\label{thm:d}
For every fragment $\L'$ and
every KB $K$,
$K$ is $\L'$-\emph{simplifiable} w.r.t.\ $\Delta^D$ 
iff $K$ is $\L'$-expressible.
\end{theorem}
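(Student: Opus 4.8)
The equivalence has one easy direction and one substantive direction. For the right-to-left implication there is nothing new to prove: if $K$ is $\L'$-expressible, then Proposition~\ref{prop:clear} already yields that $K$ is $\L'$-simplifiable (indeed even $\L'$-distributable) with respect to any operator, in particular $\Delta^D$. So the entire content lies in the left-to-right direction, and the plan is to argue that the single-base drastic operator can never produce anything that is not already $\L'$-expressible.

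For that direction, assume $K$ is $\L'$-simplifiable w.r.t.\ $\Delta^D$, i.e.\ there are an $\L'$-KB $K'$ and a constraint $\mu\in\L'$ with $\Delta^D_\mu(K')\equiv K$. The first step is to compute $\mod(\Delta^D_\mu(K'))$ explicitly. For a single consistent base the drastic distance assigns $D(\omega,K')=0$ to every $\omega\in\mod(K')$ and $D(\omega,K')=1$ to every $\omega\notin\mod(K')$, so the preorder $\leq^{D}_{K'}$ has only two levels. Minimizing over $\mod(\mu)$ therefore splits into two cases according to whether $\mod(\mu)$ meets $\mod(K')$:
\[
\mod(\Delta^D_\mu(K')) =
\begin{cases}
\mod(\mu)\cap\mod(K') & \text{if } \mod(\mu)\cap\mod(K')\neq\emptyset,\\
\mod(\mu) & \text{otherwise.}
\end{cases}
\]

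The second step is to read off $\L'$-expressibility in each case, which is exactly where closure under conjunction does the work. In the first case $\mod(K)=\mod(\mu)\cap\mod(K')$; since $\mu\in\L'$ and $K'\subseteq\L'$, the finite conjunction $\mu\wedge\bigwedge_{\phi\in K'}\phi$ lies in $\L'$ (iterating the pairwise closure-under-conjunction axiom over the finite KB $K'$) and has precisely these models, so $K$ is $\L'$-expressible. In the second case $\mod(K)=\mod(\mu)$ with $\mu\in\L'$, so $K$ is $\L'$-expressible directly. Either way the conclusion follows, which closes the non-trivial direction.

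I expect the remaining points to be bookkeeping rather than genuine obstacles. The one degenerate situation to double-check is an inconsistent $K'$ (i.e.\ $\mod(K')=\emptyset$): then all models of $\mu$ are tied and the result is again $\mod(\mu)$, so it falls under the second case and expressibility still holds. The only other thing to verify is that the fragment axioms really guarantee $\mu\wedge\bigwedge_{\phi\in K'}\phi\in\L'$, but this is immediate from the closure-under-conjunction requirement in the definition of a fragment, so no separate argument is needed.
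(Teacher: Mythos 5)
Your proof is correct, but it takes a genuinely different route from the paper's. The paper proves the only-if direction by contradiction: assuming $K$ is not $\L'$-expressible, it invokes Proposition~\ref{prop:mu} to fix $\mod(\mu)=\cl{\L'}(\mod(K))$, observes that the models of $K$ must lie in $\mod(K')$ (otherwise they would not be at minimal drastic distance), and then uses monotonicity of the closure operator together with closedness of $\mod(K')$ to conclude $\mod(K')\supseteq\cl{\L'}(\mod(K))=\mod(\mu)$, so some interpretation in $\cl{\L'}(\mod(K))\setminus\mod(K)$ sits at distance $0$ and enters the result, contradicting equivalence with $K$. You instead argue directly: you compute the output of drastic revision explicitly --- it is $\mod(\mu)\cap\mod(K')$ when this intersection is nonempty, and $\mod(\mu)$ otherwise --- and note that in either case it is the model set of a conjunction of $\L'$-formulas, hence $\L'$-expressible. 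Your argument is more elementary and self-contained: it needs neither Proposition~\ref{prop:mu} nor any property of the closure operator, only closure of $\L'$ under conjunction, and it exposes the structural reason behind the theorem, namely that $\Delta^D_\mu(K')$ is always equivalent either to $\mu\wedge\bigwedge_{\phi\in K'}\phi$ or to $\mu$ alone, so drastic revision with inputs from $\L'$ can never leave the fragment. What the paper's proof buys in exchange is reuse of the closure-operator machinery and of Proposition~\ref{prop:mu}, which also serve its Hamming-distance results, and a precise identification of which interpretations (the closure points outside $\mod(K)$) spoil equivalence. Your side remark on inconsistent $K'$ is a sensible convention for a degenerate case that the paper (following standard IC-merging, where bases are assumed consistent) silently ignores; it does not affect correctness either way.
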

\begin{proof}
%\textit{If part.} 
The if-direction is by Proposition \ref{prop:clear}.
For the other direction,
%\textit{Only if part.} Let $\L'$ be a fragment.
suppose $K$ is not $\L'$-expressible. 
We show that for any $\L'$-KB $K'$,
$\Delta^D_\mu(K')\not\equiv K$ with 
$\mu=\cl{\L'}(K)$. By Proposition~\ref{prop:mu} the result then follows.
%We have to choose $\mu$ such that
%$\mod(\mu)\supseteq \mod(K)$. Since $\mu\in\L'$, 
%$\mod(\mu)\supseteq \cl{\L'}(\mod(K))$ and since $K$ is not  from $\L'$, 
%Since $K$ is not in $\L'$, we have
%$\mod(\mu)\supset \mod(K)$. 
Now suppose there exists an $\L'$-KB $K'$ such that 
$\Delta^D_\mu(K') \equiv K$. 
First observe that since $K$ is not $\L'$-expressible, 
$\mod(\mu)\supset \mod(K)$.
Since 
we are working with drastic distance, in order to promote models of $K$, 
we also need them in $K'$, hence $\mod(K')\supseteq \mod(K)$ and since $K'$ is from $\L'$
we have $\mod(K')\supseteq \cl{\L'}(K) = \mod(\mu)$.
%$\mod(\mu)\supseteq \cl{\L'}(\mod(K))$, we need $\mod(K')\supseteq \mod(K)$.
%{\bf [Make more precise?]}.
%By the same reasoning as
%above $\mod(K')\supset\mod(K)$. 
Thus there exists $\omega\in\cl{\L'}(\mod(K))\setminus \mod(K)$ having distance $0$ to $K'$,
%in $\mod(\mu)$ 
and thus $\omega\in\Delta^D_\mu(K')$. Since $\omega\notin\mod(K)$, this yields
a contradiction to 
$\Delta^D_\mu(K') \equiv K$. 
\end{proof}

\section{Hamming Distance and Specific Fragments}
We first consider the simplest fragment under consideration, namely
conjunction of literals. As it turns out, (non-trivial) distributability 
for this
fragment w.r.t.\ $\Delta^{H,\Sigma}$ is not achievable. 
%However,
%compared to the previous results, our argument here is quite
%involved. 
We then see that more general fragments allow for
non-trivial distributions. In particular, we show that every 
KB is distributable (and even simplifiable) in the
%the 
$\krom$ case,
%$\Lkrom$ fragment, 
and we finally give a few %preliminary 
observations
for $\Lhorn$.

\subsection{The 1CNF Fragment}

The following technical result is important to prove the main result in this section. %our proof is quite involved and 

%its proof is delegated to 
%Section~\ref{sec:proof}.

%We generalize this result to profiles.

\begin{lemma}\label{cor:cap-cup-equality}
	For any $\Llits$-profile $E = (K_1, \dots, K_n)$ %is a $\lits$-profile, $\mu$ is a $\lits$-formula and $\Delta^{H, \Sigma}$ is an IC-merging operator, then for any 
%interpretations $\omega_1, \omega_2 \in \mod(\mu)$, it holds that:
and interpretations $\omega_1, \omega_2$, it holds that:
		$$
			\distance{\omega_1}{E} + \distance{\omega_2}{E} = \distance{\omega_1 \cap \omega_2}{E} + \distance{\omega_1 \cup \omega_2}{E}.
		$$
\end{lemma}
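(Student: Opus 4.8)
The plan is to exploit the linearity of the sum aggregation and thereby reduce the claim to a statement about a single knowledge base. Since the underlying aggregation here is $\sum$ (this lemma feeds the results on $\Delta^{H,\Sigma}$), we have $\distance{\omega}{E} = \sum_{i=1}^{n} H(\omega, K_i)$ for every interpretation $\omega$. Hence it suffices to establish, for each individual $\Llits$-KB $K_i$, the modularity identity
\[
H(\omega_1, K_i) + H(\omega_2, K_i) = H(\omega_1 \cap \omega_2, K_i) + H(\omega_1 \cup \omega_2, K_i),
\]
and then add these equalities over $i = 1, \dots, n$.

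First I would pin down a closed form for the Hamming distance from an interpretation to a single conjunction of literals. A consistent $\Llits$-KB $K$ is equivalent to a conjunction of literals, which partitions the alphabet $\U$ into a set $P$ of atoms forced true, a set $N$ of atoms forced false, and a set of unconstrained atoms. The model of $K$ closest to $\omega$ is obtained by retaining the truth value of $\omega$ on the unconstrained atoms and correcting only the constrained ones, so that
\[
H(\omega, K) = |P \setminus \omega| + |N \cap \omega|.
\]
I expect this step to be the only place needing genuine argument: one must verify that no model of $K$ can do better than agreeing with $\omega$ on the free atoms, and that the inconsistent case $P \cap N \neq \emptyset$ (where $\mod(K) = \emptyset$) can be set aside as degenerate.

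Given this formula, the identity reduces to a pointwise computation over atoms. Writing $\mathbf{1}_\omega(a)$ for the indicator that atom $a$ is true in $\omega$, the crucial observation is the inclusion–exclusion identity
\[
\mathbf{1}_{\omega_1}(a) + \mathbf{1}_{\omega_2}(a) = \mathbf{1}_{\omega_1 \cap \omega_2}(a) + \mathbf{1}_{\omega_1 \cup \omega_2}(a),
\]
which holds for every atom $a$ because the values lie in $\{0,1\}$. Summing this over $a \in N$ yields the $N$-contribution of the desired equation directly, and complementing (replacing $\mathbf{1}_\omega(a)$ by $1 - \mathbf{1}_\omega(a)$ for $a \in P$, which inherits the same identity) yields the $P$-contribution. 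Adding the two parts establishes the identity for a single $K$, and summing over the profile completes the proof. The conceptual point is that on $\Llits$ the map $H(\cdot, K)$ is a \emph{modular} set function; this meet/join equality is precisely what fails for richer fragments, explaining why $\Llits$ behaves specially.
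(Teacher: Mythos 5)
Your proof is correct, and it takes a genuinely different route at the crucial step. Both arguments start with the same reduction: by linearity of the $\sum$ aggregation it suffices to prove $H(\omega_1,K_i)+H(\omega_2,K_i)=H(\omega_1\cap\omega_2,K_i)+H(\omega_1\cup\omega_2,K_i)$ for each single KB $K_i$ and then add over the profile. From there the paper argues model-theoretically: it picks closest models $\omega_1',\omega_2'\in\mod(K_i)$ of $\omega_1,\omega_2$, claims that $\omega_1'\cap\omega_2'$ and $\omega_1'\cup\omega_2'$ are then closest models of $\omega_1\cap\omega_2$ and $\omega_1\cup\omega_2$, and proves the resulting four-interpretation identity --- both steps only sketched, via inductions on the size of the alphabet. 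You instead exploit the syntactic shape of the fragment: a consistent $\Llits$-KB is a conjunction of literals, so its models form the subcube $\{\omega \mid P\subseteq\omega,\ \omega\cap N=\emptyset\}$ and $H(\omega,K_i)=|P\setminus\omega|+|N\cap\omega|$ in closed form, after which the identity follows atom-by-atom from $\mathbf{1}_{\omega_1}(a)+\mathbf{1}_{\omega_2}(a)=\mathbf{1}_{\omega_1\cap\omega_2}(a)+\mathbf{1}_{\omega_1\cup\omega_2}(a)$. Your route buys completeness and transparency: every step is verified outright, the minimality of $(\omega\cup P)\setminus N$ being a two-line check, and the modularity of $H(\cdot,K_i)$ becomes explicit rather than emerging from an unproven case analysis. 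The paper's route buys independence from syntax, since it manipulates only model sets and their lattice structure, a style that transfers to fragments defined purely by closure operators. Do state explicitly that profile KBs are assumed consistent (the paper needs this implicitly as well, since $H(\omega,K_i)$ is a minimum over $\mod(K_i)$); with that caveat your argument is a complete and arguably tighter proof.
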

\begin{proof}
	%It can be shown 
	It suffices to show that for each $K_i$ in profile $E$, %it holds that:
	$
	\distance{\omega_1}{K_i} + \distance{\omega_2}{K_i} = \distance{\omega_1\cap \omega_2}{K_i} + \distance{\omega_1 \cup \omega_2}{K_i}
	$.
	%The proof is quite long and is given separatly
	%(a full proof for this fact is given %separatly
	%as Lemma~\ref{lemma:1cnf-cap-cup-equality} in 
%Section~\ref{sec:proof}).
%% %
	Indeed, 
	summing up these equalities over all $K_i \in E$, we get
	\begin{eqnarray*}
	&\sum_{K_i \in E}\distance{\omega_1}{K_i} + \sum_{K_i \in E}\distance{\omega_2}{K_i} =\\ 
	&\sum_{K_i \in E}\distance{\omega_1\cap \omega_2}{K_i} + \sum_{K_i \in E}\distance{\omega_1 \cup \omega_2}{K_i}.
	\end{eqnarray*}	
	Since $\distance{\omega}{E} = \Sigma_{K_i \in E} \distance{\omega}{K_i}$, for any interpretation $\omega$, our conclusion then follows immediately.

Thus,
take $\omega_1', \omega_2'$ to be two interpretations that are closest to $\omega_1$ and $\omega_2$, respectively, among the models of $\mod(K_i)$. 
In other words, $H(\omega_1, \omega_1') = \min_{\omega \in \mod(K_i)} H(\omega_1, \omega)$ and $H(\omega_2, \omega_2') = \min_{\omega \in \mod(K_i)} H(\omega_2, \omega)$. 
By induction on the number of propositional atoms in $\L$, we can show that $\omega_1' \cap \omega_2'$ and $\omega_1' \cup \omega_2'$ are closest in $\mod(K_i)$ to $\omega_1 \cap \omega_2$ and $\omega_1 \cup \omega_2$, respectively.
Thus, we have that $\distance{\omega_1}{K_i} = H(\omega_1, \omega_1')$, $\distance{\omega_2}{K_i} = H(\omega_2, \omega_2')$, $\distance{\omega_1 \cap \omega_2}{K_i} = H(\omega_1 \cap \omega_2, \omega_1' \cap \omega_2')$, $\distance{\omega_1 \cup \omega_2}{K_i} = H(\omega_1 \cup \omega_2, \omega_1' \cup \omega_2')$, 
and our problem reduces to showing that 	$H(\omega_1, \omega_1') + H(\omega_2, \omega_2') = H(\omega_1 \cap \omega_2, \omega_1' \cap \omega_2') + H(\omega_1 \cup \omega_2, \omega_1' \cup \omega_2')$.
By using induction on the number of propositional atoms in $\L$ again, we can show that this equality holds. The argument runs as follows: in the base case, when the alphabet consists of just one propositional atom, the equality is shown to be true by checking all the cases.
For the inductive step we assume the claim holds for an alphabet of size $n$ and show that it also holds for an alphabet of size $n+1$.
More concretely, we analyze the way in which the Hamming distances between interpretations change when we add a propositional atom to the alphabet. 
An analysis of all the possible cases shows that the equality holds.
%% %
\end{proof}

Next we observe certain patterns of interpretations that indicate whether a KB is $\Llits$-expressible or not.

\begin{definition}\label{def:critical-pair}
If $K$ is a knowledge base, then a pair of interpretations $\omega_1$ and $\omega_2$ are called \textit{critical with respect to $K$} if $\omega_1 \nsubseteq \omega_2$ and $\omega_2 \nsubseteq \omega_1$, and one of the following cases holds:
	\begin{enumerate}
	\item $\omega_1, \omega_2 \in \mod(K)$ and $\omega_1 \cap \omega_2, \omega_1 \cup \omega_2 \notin \mod(K)$, 
	\item $\omega_1, \omega_2, \omega_1 \cap \omega_2 \in \mod(K)$ and $\omega_1 \cup \omega_2 \notin \mod(K)$,
	\item $\omega_1, \omega_2, \omega_1 \cup \omega_2 \in \mod(K)$ and $\omega_1\cap \omega_2 \notin \mod(K)$,
	\item $\omega_1 \cap \omega_2, \omega_1 \cup \omega_2 \in \mod(K)$ and $\omega_1, \omega_2 \notin \mod(K)$, or
	\item $\omega_1, \omega_1 \cap \omega_2, \omega_1 \cup \omega_2 \in \mod(K)$ and $\omega_2 \notin \mod(K)$.
	\end{enumerate}
\end{definition}

\begin{lemma}\label{lemma:critical-pairs}
	If a KB $K$ is not $\Llits$-expressible, then there exist %interpretations 
$\omega_1, \omega_2 \in \cl{\Llits}(K)$ being critical with respect to $K$.
\end{lemma}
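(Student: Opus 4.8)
The plan is to locate a one-step failure of the closure operator and read a critical pair off it directly. Since $K$ is not $\Llits$-expressible, $\mod(K)$ is not a fixed point of $\cl{\Llits}^1$; as this one-step operator is extensive (take $\omega_1=\omega_2$), this forces $\cl{\Llits}^1(\mod(K)) \supsetneq \mod(K)$. Hence at least one of the three generators of $\cl{\Llits}^1$, applied to models of $K$, escapes $\mod(K)$: there are $\omega_1, \omega_2 \in \mod(K)$ with (a) $\omega_1 \cap \omega_2 \notin \mod(K)$, or (b) $\omega_1 \cup \omega_2 \notin \mod(K)$, or (c) some $\omega_3$ with $\omega_1 \subseteq \omega_3 \subseteq \omega_2$ and $\omega_3 \notin \mod(K)$. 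I will produce a critical pair in each case, using throughout that any interpretation lying in an interval $[\alpha,\beta]$ with $\alpha,\beta \in \mod(K)$ automatically belongs to $\cl{\Llits}(K)$ (by the interval-filling clause), so the witnesses I exhibit are of the required form.

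Cases (a) and (b) are immediate. In (a), $\omega_1$ and $\omega_2$ are necessarily incomparable (otherwise $\omega_1 \cap \omega_2 \in \{\omega_1,\omega_2\} \subseteq \mod(K)$), and then the status of $\omega_1 \cup \omega_2$ splits the situation into critical case~1 (if $\omega_1 \cup \omega_2 \notin \mod(K)$) or critical case~3 (if $\omega_1 \cup \omega_2 \in \mod(K)$). Case (b) is symmetric, yielding critical case~1 or critical case~2.

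The crux is case (c), where the escaping element comes from interval-filling rather than from $\cap$ or $\cup$, so the two models $\omega_1,\omega_2$ are comparable and do not themselves form a critical pair. Here the key trick is to reflect $\omega_3$ inside the interval: set $\omega_3' = \omega_1 \cup (\omega_2 \setminus \omega_3)$. Since $\omega_1 \subsetneq \omega_3 \subsetneq \omega_2$ (strictness because $\omega_3 \notin \mod(K)$ while $\omega_1,\omega_2 \in \mod(K)$), a direct computation gives $\omega_3 \cap \omega_3' = \omega_1$ and $\omega_3 \cup \omega_3' = \omega_2$, and shows $\omega_3,\omega_3'$ are incomparable. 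Thus $\omega_3 \cap \omega_3', \omega_3 \cup \omega_3' \in \mod(K)$ while $\omega_3 \notin \mod(K)$, so $\{\omega_3,\omega_3'\}$ realizes critical case~4 (if $\omega_3' \notin \mod(K)$) or critical case~5 (if $\omega_3' \in \mod(K)$). As $\omega_3,\omega_3' \in [\omega_1,\omega_2] \subseteq \cl{\Llits}(K)$, this furnishes the desired critical pair.

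The only genuine obstacle is case (c): one must convert a pure interval-filling failure into an \emph{incomparable} pair, which is exactly what the reflection $\omega_3 \mapsto \omega_1 \cup (\omega_2 \setminus \omega_3)$ accomplishes, and it is this case that forces critical patterns~4 and~5 into the definition. The remaining verifications---incomparability, the two intersection/union identities, and the reduction to $\cl{\Llits}^1$---are routine. I would also first record the elementary equivalence that $\mod(K) = \cl{\Llits}(\mod(K))$ iff $K$ is $\Llits$-expressible, which is what legitimizes the opening reduction.
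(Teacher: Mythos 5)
Your proposal is correct and takes essentially the same route as the paper's proof: the same case split between a one-step closure failure under $\cap$/$\cup$ (yielding critical cases 1--3) and an interval-filling failure, resolved by the very same reflection $\omega_3 \mapsto \omega_1 \cup (\omega_2 \setminus \omega_3)$ to produce an incomparable pair realizing cases 4 or 5. The only difference is presentational: you make the reduction to the one-step operator $\cl{\Llits}^1$ and the verification of incomparability explicit, where the paper states them more tersely.
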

\begin{proof}
	The fact that $K$ is not $\Llits$-expressible implies that either: (i) $K$ is not closed under intersection or union, or (ii) there are $w_1, w_2, w_3 \in \cl{\Llits}(K)$ such that $w_1 \subseteq w_3 \subseteq w_2$, and $w_1, w_2 \in \mod(K)$, $w_3 \notin \mod(K)$. Case (i) implies that there exist $w_1, w_2 \in \mod(K)$ such that one of Cases 1-3 from Definition~\ref{def:critical-pair} holds. If we are in Case (ii), then consider the interpretation $w_4 = (w_2 \backslash w_3) \cup w_1$. Clearly, $w_1 \subseteq w_4 \subseteq w_2$, hence $w_4 \in \cl{\Llits}(K)$. Also, $w_3 \cap w_4 = w_1$ and $w_3 \cup w_4 = w_2$. There are two sub-cases to consider here. If $w_4 \notin \mod(K)$, then we are in Case~4 of Definition~\ref{def:critical-pair}. If $w_4 \in \mod(K)$, then we are in Case~5 of Definition~\ref{def:critical-pair}.
\end{proof}

\begin{example}
Let us consider the KB $K$ such that $\mod(K) = \{\emptyset,\{a\},\{b\},\{c\},\{a,c\},\{b,c\},\{a,b,c\}\}$. $K$ is not $\lits$-expressible; indeed, $\cl{\lits}(\mod(K)) = \mod(K) \cup \{\{a,b\}\}$.

Here, we identify several sets of critical interpretations w.r.t. $K$.
First, $S_1 = \{\{a,c\}, \{a,b\}, \{a\}, \{a,b,c\}\}$ corresponds to the situation described in Case 5 of Definition 9, with $\omega_1 = \{a,c\}$ and $\omega_2 = \{a,b\}$.

The set $S_2 = \{\{b,c\},\{a,b\}, \{b\}, \{a,b,c\}\}$ also corresponds to Case 5, with $\omega_1 = \{b,c\}$ and $\omega_2 = \{a,b\}$.

We can also consider the set of interpretations $S_3 = \{\emptyset, \{a\}, \{b\}, \{a,b\}\}$, which corresponds to Case 2 of Definition 9, with $\omega_1 = \{a\}$ and $\omega_2 = \{b\}$.
The models of $K$ and the sets of critical interpretations are represented in Figure~\ref{fig:critical-interpretations}.

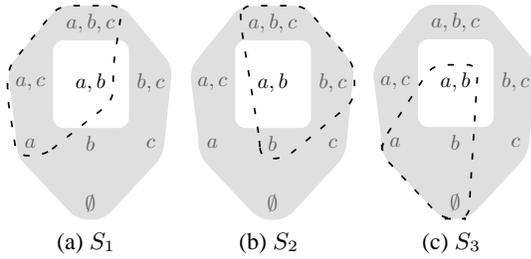
\begin{figure}[h]
\centering
\subfloat[$S_1$]{
\scalebox{0.8}{
\begin{tikzpicture}[auto,node distance=1cm,thick]
\node (abc) {$a,b,c$};
\node (ab) [below of=abc] {$a,b$};
\node (ac) [left of=ab] {$a,c$};
\node (bc) [right of=ab] {$b,c$};
\node (a) [below of=ac] {$a$};
\node (b) [below of=ab] {$b$};
\node (c) [below of=bc] {$c$};
\node (empty) [below of=b] {$\emptyset$};

 \fill[color=lightgray,opacity=0.5,rounded corners=5pt,even odd rule] (a.west) -- (ac.north west) -- (abc.north west) -- (abc.north)
              -- (abc.north east) -- (bc.north east) -- (c.south east)
              -- (empty.south east) -- (empty.south)
              -- (empty.south west) -- (a.south west) -- cycle
              (abc.south) -| (bc.west) |- (b.north) -| (ac.east) |- (abc.south);

\draw[loosely dashed,rounded corners=5pt] (a.south east) -- (ab.south east) -- (ab.north east)
                      -- (abc.north east) -- (abc.north west)
                      -- (ac.north west) -- (ac.south west) -- (a.south west)
                      -- cycle;

\end{tikzpicture}
}}
\subfloat[$S_2$]{
\scalebox{0.8}{
\begin{tikzpicture}[auto,node distance=1cm,thick]
\node (abc) {$a,b,c$};
\node (ab) [below of=abc] {$a,b$};
\node (ac) [left of=ab] {$a,c$};
\node (bc) [right of=ab] {$b,c$};
\node (a) [below of=ac] {$a$};
\node (b) [below of=ab] {$b$};
\node (c) [below of=bc] {$c$};
\node (empty) [below of=b] {$\emptyset$};

 \fill[color=lightgray,opacity=0.5,rounded corners=5pt,even odd rule] (a.west) -- (ac.north west) -- (abc.north west) -- (abc.north)
              -- (abc.north east) -- (bc.north east) -- (c.south east)
              -- (empty.south east) -- (empty.south)
              -- (empty.south west) -- (a.south west) -- cycle
              (abc.south) -| (bc.west) |- (b.north) -| (ac.east) |- (abc.south);

\draw[loosely dashed,rounded corners=5pt]  (b.west)
              -- (abc.west) |- (abc.north east)
              -- (bc.north east) -- (bc.south east)
              -- (b.south east) -- (b.south west) -- cycle;
\end{tikzpicture}
}}
\subfloat[$S_3$]{
\scalebox{0.8}{
\begin{tikzpicture}[auto,node distance=1cm,thick]
\node (abc) {$a,b,c$};
\node (ab) [below of=abc] {$a,b$};
\node (ac) [left of=ab] {$a,c$};
\node (bc) [right of=ab] {$b,c$};
\node (a) [below of=ac] {$a$};
\node (b) [below of=ab] {$b$};
\node (c) [below of=bc] {$c$};
\node (empty) [below of=b] {$\emptyset$};

 \fill[color=lightgray,opacity=0.5,rounded corners=5pt,even odd rule] (a.west) -- (ac.north west) -- (abc.north west) -- (abc.north)
              -- (abc.north east) -- (bc.north east) -- (c.south east)
              -- (empty.south east) -- (empty.south)
              -- (empty.south west) -- (a.south west) -- cycle
              (abc.south) -| (bc.west) |- (b.north) -| (ac.east) |- (abc.south);

\draw[loosely dashed,rounded corners=5pt]  (a.west) -- (ab.north west)
              -- (ab.north east)
              -- (empty.south east) -- (empty.south)
              -- (empty.south west) -- (a.south west) -- cycle;
\end{tikzpicture}
}}
\caption{Models of $K$ are in the shaded area; critical interpretations are in the dashed areas. \label{fig:critical-interpretations}}
\end{figure}
\end{example}

We can now state the central result of this section.

\begin{theorem}
A KB $K$ is $\Llits$-distributable with respect to $\Delta^{H, \Sigma}$ if and only if $K$ is $\Llits$-expressible.
\end{theorem}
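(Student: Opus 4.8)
The plan is to prove both directions, with essentially all of the work concentrated in the ``only if'' part. The ``if'' direction is immediate from Proposition~\ref{prop:clear}: any $\Llits$-expressible $K$ is even $\Llits$-simplifiable, hence $\Llits$-distributable, w.r.t.\ every merging operator, in particular $\Delta^{H,\Sigma}$.

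For the ``only if'' direction I would argue by contraposition: assuming $K$ is not $\Llits$-expressible, I would show it is not $\Llits$-distributable w.r.t.\ $\Delta^{H,\Sigma}$. So suppose toward a contradiction that some $\Llits$-profile $E$ and $\mu\in\Llits$ satisfy $\Delta^{H,\Sigma}_\mu(E)\equiv K$. By Proposition~\ref{prop:mu} I may assume $\mod(\mu)=\cl{\Llits}(\mod(K))$. Let $m=\min_{\omega\in\mod(\mu)}\distance{\omega}{E}$ be the minimal aggregated distance. By Definition~\ref{def:icm}, the models of $K$ are exactly the $\leq_E^{H,\Sigma}$-minimal elements of $\mod(\mu)$, so $\distance{\omega}{E}=m$ for every $\omega\in\mod(K)$, while $\distance{\omega}{E}>m$ for every $\omega\in\mod(\mu)\setminus\mod(K)$.

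Now I invoke Lemma~\ref{lemma:critical-pairs} to obtain a pair $\omega_1,\omega_2\in\cl{\Llits}(K)$ that is critical w.r.t.\ $K$. Since $\cl{\Llits}(\mod(K))$ is closed under intersection and union (it is the fixed point of $\cl{\Llits}^1$), all four interpretations $\omega_1,\omega_2,\omega_1\cap\omega_2,\omega_1\cup\omega_2$ lie in $\mod(\mu)$, so the preceding dichotomy ($=m$ versus $>m$) applies to each of them. The engine of the contradiction is Lemma~\ref{cor:cap-cup-equality}, which yields
$$\distance{\omega_1}{E}+\distance{\omega_2}{E}=\distance{\omega_1\cap\omega_2}{E}+\distance{\omega_1\cup\omega_2}{E}.$$
The key observation is that in each of the five cases of Definition~\ref{def:critical-pair} the pair $\{\omega_1,\omega_2\}$ and the pair $\{\omega_1\cap\omega_2,\omega_1\cup\omega_2\}$ contain \emph{different} numbers of models of $K$: two versus zero in Cases~1 and~4, and two versus one (in one order or the other) in Cases~2,~3 and~5. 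Since each $K$-model contributes exactly $m$ to its side of the equality, and each non-$K$-model that still lies in $\mod(\mu)$ contributes strictly more than $m$, the two sides must be unequal, contradicting Lemma~\ref{cor:cap-cup-equality}. This contradiction shows that $K$ cannot be $\Llits$-distributable, completing the contrapositive.

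I expect the main obstacle to be conceptual rather than computational: recognising that distributability forces the ``flat'' structure in which all minimal models share the single value $m$, and then matching this against the combinatorial imbalance encoded in the critical-pair cases. Once the counting observation above is in place, the five cases collapse into a single uniform argument, so the real content lies in correctly combining Lemmas~\ref{cor:cap-cup-equality} and~\ref{lemma:critical-pairs} rather than in any case-by-case calculation. The one point I would check carefully is the closure membership of $\omega_1\cap\omega_2$ and $\omega_1\cup\omega_2$, since it is precisely this that guarantees these interpretations are eligible competitors in the minimization defining $\Delta^{H,\Sigma}_\mu(E)$ and hence carry distance strictly above $m$ when they fail to be models of $K$.
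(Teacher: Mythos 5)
Your proposal is correct and follows essentially the same route as the paper's proof: the ``if'' direction via Proposition~\ref{prop:clear}, and the ``only if'' direction by assuming distributability, normalizing $\mu$ via Proposition~\ref{prop:mu}, extracting a critical pair via Lemma~\ref{lemma:critical-pairs}, and contradicting Lemma~\ref{cor:cap-cup-equality}. The only difference is presentational: where the paper plugs explicit values $m, m+k_1, m+k_2$ into the equality case by case, you collapse the five cases into one counting observation (valid here because in every case one side of the equality consists of two $K$-models, hence equals $2m$, while the other side contains a non-model and so strictly exceeds $2m$).
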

\begin{proof} 
\textit{If part.} By Proposition~\ref{prop:clear}.
 % Let $K$ be a $\lits$ knowledge base, and consider the profile $E = \profile{K}$  and constraint $\mu = \bigwedge K$. Then $\merge{H, \Sigma}{\mu}{E} \equiv K$. In other words: if $K$ is a $\lits$-knowledge base, then $K$ is trivially $\lits$-distributable, as we can distribute it over itself.

\textit{Only if part.} Let $K$ be a KB that is not $\Llits$-expressible. We will show that it is not $\Llits$-distributable w.r.t.\ $\Delta^{H,\Sigma}$. Suppose, on the contrary, that $K$ is $\Llits$-distributable. Then there exists an $\Llits$ profile $E = (K_1, \dots, K_n)$ such that $\merge{H, \Sigma}{\mu}{E} \equiv K$, where $\mod(\mu) = \cl{\Llits}(\mod(K))$ (cf.\ Proposition~\ref{prop:mu}).

By Lemma~\ref{lemma:critical-pairs}, there exist interpretations $\omega_1, \omega_2 \in \mod(\mu)$ that are critical with respect to $K$. By Lemma~\ref{cor:cap-cup-equality}, we have
\begin{equation}\label{eq:cap-cup-eq}
\!
	\distance{\omega_1}{E} {+} \distance{\omega_2}{E} {=} \distance{\omega_1 {\cap} \omega_2}{E} {+} \distance{\omega_1 {\cup} \omega_2}{E}.\!
\end{equation}
Let us now do a case analysis depending on the type of critical pair we are dealing with. If we are in Case~1 of Definition~\ref{def:critical-pair}, then it needs to be the case that $\distance{\omega_1}{E} = \distance{\omega_2}{E} = m$, $\distance{\omega_1 \cap \omega_2}{E} = m+k_1$ and $\distance{\omega_1 \cup \omega_2}{E}=m+k_2$, for some integers $m \geq 0$ and $k_1, k_2 > 0$. Plugging these numbers into Equality~(\ref{eq:cap-cup-eq}), we get that $2m = 2m + k_1 + k_2$ and $k_1 + k_2 = 0$. Since $k_1, k_2 > 0$, we have arrived at a contradiction. 
If we are in Case~2, then it needs to be the case that ${\distance{\omega_1 \cap \omega_2}{E}} = \distance{\omega_1 \cup \omega_2}{E} = m$, $\distance{\omega_1}{E} = m+k_1$ and $\distance{\omega_2}{E}=m+k_2$, for some integers $m \geq 0$ and $k_1, k_2 > 0$. Plugging these numbers into Equality~(\ref{eq:cap-cup-eq}) again, we get a contradiction along the same lines as in Case~1. 
If we are in Case~3, then it needs to hold that $\distance{\omega_1}{E} = \distance{\omega_1 \cap \omega_2}{E} = \distance{\omega_1 \cup \omega_2}{E} = m$, $\distance{\omega_2}{E} = m+k$, for some integers $m \geq 0$ and $k > 0$. Plugging these numbers into Equality~(\ref{eq:cap-cup-eq}) gives us $2m + k = 2m$ and hence $k = 0$. Since $k > 0$, we have arrived at a contradiction. Cases~4 and~5 are entirely similar.
\end{proof}

In other words, for any $\Llits$-profile and $\mu\in\lits$, 
$\Delta^{H,\sum}_\mu$ is guaranteed to be $\Llits$-expressible as well.
As we have already shown in Theorem~\ref{thm:gmin}, this is not necessarily the case
if we replace $\sum$ by $\GMIN$. 
The following example shows how to obtain a similar behavior for $\GMAX$; we then generalize
this idea below.

\begin{example}
Let 
$\U=\{a,b\}$ and 
$K=\{ a\vee b, \neg a \vee \neg b\}$.  We have 
$\mod(K)=\{\{a\},\{b\}\}$. $K$ is not $\Llits$-expressible, since 
$\cl{\Llits}(\mod(K))=2^\U$. Let $K_{S}$ be the $\Llits$-KB with a single 
model $S$ for any $S\subseteq \U$ and let us have a look at the following distance matrix 
for $\mu$ with $\mod(\mu)=\cl{\Llits}(\mod(K))$, 
$E=(K_{\{a\}},K_{\{b\}})$,
and 
$E'=(K_{\emptyset},K_{\{a,b\}})$.
\begin{small}
$$
\begin{array}{lcccccc}
&
\!\!K_{\emptyset} &
\!K_{\{a\}} &
\!K_{\{b\}} &
\!K_{\{a,b\}} &
\!\!H^\GMIN(E) & 
\!\!H^\GMAX(E') \\
%\GMIN(K_{\{a\}},K_{\{b\}}) & 
%\GMAX(K_{\emptyset},K_{\{a,b\}}) \\
\emptyset & \!\!0 & 1 & 1 & 2 & \!(1,1) &\! (2,0) \\
\{a\} & \!\!1 &  0 & 2 & 1 & \!(0,2) & \!(1,1) \\
\{b\} & \!\!1 & 2 & 0 & 1 & \!(0,2) & \!(1,1) \\
\{a,b\} & \!\!2 & 1 & 1 & 0 & \!(1,1) & \!(2,0)
\end{array}
$$
\end{small}

Recall that the lexicographic order of the involved vectors is $(0,2)<(1,1)<(2,0)$.
We thus get that
$\Delta^{H,\GMIN}_\mu(E)\equiv K$ (see also Theorem~\ref{thm:gmin}), and
on the other hand, 
$\Delta^{H,\GMAX}_\mu(E')\equiv K$.
\hfill$\diamond$
\end{example}
%% \begin{theorem}
%% For every knowledge base $K$ it holds that $K$ is $\lits$-distributable
%% w.r.t. $\Delta^{H,\GMax}$ if and only if $K$ is a $\lits$-knowledge base.\\
%% {\bf Wrong: at least Adrian's examples work.}
%% \end{theorem}
%% 
%% \begin{proof}
%% \textit{If part.} Comes from Proposition \ref{prop:clear}.
%% 
%% \textit{Only if part.} Hence, assume $K$ is not a $\lits$-knowledge
%% base. Then there exist two intepretations $\omega_1,\omega_2 \in
%% \mod(K)$ such that $\cl{\lits}(\{\omega_1,\omega_2\})\setminus \mod(K)
%% \neq\emptyset$.  Let $\omega_3 \in
%% \cl{\lits}(\{\omega_1,\omega_2\})\setminus \mod(K)$. We have the
%% following observations for any knowledge base $K' \in \lits$ {\bf
%% (needs to be checked)}: If $H(K',\omega_1)=H(K',\omega_2)=e$ then
%% $H(K',\omega_3)\leq e$. If $H(K',\omega_1)<H(K',\omega_2)$ then
%% $H(K',\omega_3)<H(K',\omega_2)$.  In other words, we cannot have a
%% value for $\omega_3$ that is higher than for $\omega_1$ and $\omega_2$
%% simultanouly, thus $\GMax$ can never outrule $\omega_3$.  % The fact that
%% % K cannot be distributed then, is by some proposition from the draft, I
%% % guess.
%% \end{proof}
% Remark.
% Hence, GMAX is equally bad than \Sigma for Hamming-distance
% when it comes to distribute into the 1CNF fragment.

\begin{theorem}
Any KB $K$ such that $\mod(K)=\{\omega, \omega'\}$ is $\Llits$-distributable with respect to $\Delta^{H, \GMAX}$.
\end{theorem}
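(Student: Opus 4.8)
The plan is to first dispose of the easy case and then give an explicit profile for the hard one. If $\omega=\omega'$ or, more generally, if $K$ is $\Llits$-expressible, then $K$ is $\Llits$-distributable by Proposition~\ref{prop:clear}, so I may assume $K$ is not $\Llits$-expressible. Write $I=\omega\cap\omega'$, $A=\omega\setminus\omega'$, $B=\omega'\setminus\omega$ and $O=\U\setminus(\omega\cup\omega')$; thus $\omega=I\cup A$ and $\omega'=I\cup B$. By Proposition~\ref{prop:mu} it suffices to find an $\Llits$-profile $E$ that, together with a constraint $\mu$ satisfying $\mod(\mu)=\cl{\Llits}(\mod(K))$, yields $\min(\mod(\mu),\leq^{H,\GMax}_E)=\{\omega,\omega'\}$. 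Since $\cl{\Llits}(\{\omega,\omega'\})$ is exactly the interval of all $S$ with $I\subseteq S\subseteq I\cup A\cup B$, inside $\mod(\mu)$ the atoms of $I$ are fixed to true and those of $O$ to false, while the atoms of $A\cup B$ are free; so it remains to single out, among all valuations of $A\cup B$, precisely the two valuations ``$A$ true, $B$ false'' and ``$A$ false, $B$ true''.

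The construction I would use exploits the fact that, for an $\Llits$-KB $K_i$ (a conjunction of literals), $H(S,K_i)$ equals the number of literals of $K_i$ that $S$ falsifies. The profile $E$ consists only of two-literal bases, one gadget per unordered pair of atoms in $A\cup B$: for a pair lying inside $A$ or inside $B$ I add the bases $x\wedge\neg y$ and $\neg x\wedge y$ (rewarding $x,y$ taking the same value), and for a pair with one atom in $A$ and one in $B$ I add $x\wedge y$ and $\neg x\wedge\neg y$ (rewarding $x,y$ taking opposite values). Intuitively this encodes a system of equality/inequality constraints --- ``agree within $A$, agree within $B$, disagree across'' --- whose only two solutions are $\omega$ and $\omega'$.

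The verification splits into two observations. First, both $\omega$ and $\omega'$ are consistent with every constraint, and a direct check of the four gadget shapes shows that a consistent valuation falsifies exactly one literal in each of the two bases of every pair; hence $\omega$ and $\omega'$ each falsify every base exactly once, giving both the $\GMax$-vector $(1,\dots,1)$. Second, for any pair whose constraint is violated, the same case check shows that one of its two bases is falsified twice; consequently any valuation of $A\cup B$ other than the two target ones produces a base with at least two falsified literals, so its $\GMax$-vector has first entry $\geq 2$ and is lexicographically larger than $(1,\dots,1)$. Combining the two, $\omega$ and $\omega'$ are the unique $\leq^{H,\GMax}_E$-minimal models of $\mu$, i.e.\ $\Delta^{H,\GMax}_\mu(E)\equiv K$.

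The main obstacle is the second observation, namely ruling out any over-generation of models: I must argue that the only valuations whose maximal per-base violation is at most $1$ are $\omega$ and $\omega'$. This is where using \emph{all} pairs is convenient --- the equality/inequality constraints then form a complete (signed) graph, which is rigid, so fixing the value of one atom forces the value of every other and leaves exactly the two global assignments $\omega,\omega'$. I would finally check the degenerate cases: if $A$ or $B$ has at most one atom the corresponding ``agreement'' gadgets are vacuous but no longer needed (a singleton is automatically uniform), and non-$\Llits$-expressibility guarantees that enough cross- or within-group pairs survive for the constraint graph to remain connected (if $\omega,\omega'$ are comparable then one of $A,B$ is empty and the other has at least two atoms; if incomparable then both are nonempty), so the profile is non-empty and the rigidity argument goes through.
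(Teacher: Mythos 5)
Your proposal is correct, but it takes a genuinely different route from the paper. The paper's proof penalizes each unwanted interpretation individually: for every $\omega_i \in \cl{\Llits}(\mod(K))\setminus\mod(K)$ it puts into the profile a KB whose single model is the complement $\U\setminus\omega_i$, so that each unwanted interpretation sits at the maximal distance $|\U|$ from ``its'' KB, while $\omega,\omega'$ stay strictly below $|\U|$ everywhere; it then still has to prove that $\omega$ and $\omega'$ receive \emph{equal} $\GMax$-vectors, which requires a combinatorial counting argument over the subset lattice between $\omega\cap\omega'$ and $\omega\cup\omega'$ (counting how many lattice elements lie at each distance from $\omega$ and from $\omega'$). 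Your construction instead encodes $\{\omega,\omega'\}$ as the solution set of a system of equality/inequality constraints, realized by two-literal gadgets: satisfied constraints contribute exactly one falsified literal to each of the pair's two bases, violated constraints force some base to distance $2$, so $\omega$ and $\omega'$ tie at $(1,\dots,1)$ \emph{by symmetry of the gadgets}, with no counting needed, and the connectivity (rigidity) of the complete constraint graph rules out over-generation. This buys two things: the tie between $\omega$ and $\omega'$ is immediate rather than the delicate part of the proof, and your profile has only $2\binom{|A|+|B|}{2}$ knowledge bases (polynomial in the Hamming distance between the two models), whereas the paper's profile has one KB per excluded interpretation, i.e.\ $2^{H(\omega,\omega')}-2$ of them. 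Your handling of the degenerate cases (comparable models, singleton groups) is also sound, since non-$\Llits$-expressibility forces the surviving gadget set to be nonempty and the constraint graph connected. The one point worth making fully explicit in a final write-up is the fact you use silently throughout: for a consistent conjunction of literals $K_i$, $H(S,K_i)$ equals the number of literals of $K_i$ falsified by $S$ -- this is what licenses reading the gadget tables as distance vectors.
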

\begin{proof}
If $K$ is $\Llits$-expressible, then the conclusion follows from Proposition~\ref{prop:clear}. 
If $K$ is not $\Llits$-expressible, then consider the set $\cl{\Llits}(\mod(K))\backslash\mod(K)=\{\omega_1, \dots, \omega_n\}$.
We define the profile $E=(K_1, \dots, K_n)$, where $\mod(K_i)=\{\U\backslash \omega_i\}$, for $i \in \{1, \dots, n\}$.
We show that $\merge{H, \GMAX}{\mu}{E} \equiv K$, where $\mod(\mu) = \cl{\Llits}(\mod(K))$.

First,  we have that $H(\omega_i, \U\backslash \omega_i)=|\U|$, which implies that $H^\GMAX(\omega_i, E)=\GMAX(|\U|, \dots)$, for any $i \in \{1, \dots, n\}$. 
Furthermore, since $H(\omega, \U \backslash \omega_i) < |\U|$ and $H(\omega', \U \backslash \omega_i)<|\U|$, for any $i \in \{1, \dots, n\}$, it follows that $\omega <_E^{H, \GMAX} \omega_i$ and $\omega' <_E^{H, \GMAX} \omega_i$. 
Next, we show that $H^\GMAX(\omega, E)=H^\GMAX(\omega', E)$.

Consider the vectors $V = (H(\omega, \omega_1), \dots, H(\omega, \omega_n))$ and $V' = (H(\omega', \omega_1), \dots, H(\omega', \omega_n))$. 
Our claim is that $\GMAX(V)=\GMAX(V')$.
To see why, notice that the elements in $\cl{\Llits}(\mod(K))$ form a complete subset lattice with $\omega\cup \omega'$ and $\omega\cap \omega'$ as the top and bottom elements, respectively. 
Let us write $H(\omega,\omega')=m$.
This lattice has $2^m$ elements, and the maximum distance of two elements in it is $m$. 
Thus, the vector $V$ is the vector of distances between $\omega$ and every other element in this lattice, except itself and $\omega'$.
A similar consideration holds for $V'$.
Hence $V$ and $V'$ are vectors of length $2^{m-2}$ whose elements are $m-1,m-2, \ldots, 1$.
We can actually count how many times each number appears in $V$ and $V'$.
The number of interpretations in the lattice that are at distance of 1 from $\omega$ (and $\omega'$) is $\binom{1}{m}$: thus, $m-1$ appears $\binom{1}{m}$ times in $V$ (and $V'$). 
The number of interpretations that are at distance 2 from $\omega$ (and $\omega'$) is $\binom{2}{m}$, thus $m-2$  appears  $\binom{2}{m}$ times in $V$ and $V'$. 
We iterate this argument for every distance, up to 1. 
It is then easy to see that, based on these considerations, $V$ and $V'$ are equal when sorted in descending order.
Our conclusion follows from this.
\end{proof}	

\subsection{The 2CNF Fragment}

We show that every knowledge base $K$ can be distributed in the fragment $\Lkrom$. Even a single $\Lkrom$ knowledge base is enough to represent $K$.
Before giving the general result, we sketch the idea via an example.

\begin{example}
%Let $\U=\{a,b,c,d,e,f,g\}$ and consider 
%$K=\{ a\vee b\vee c,
%\neg a\vee \neg b, \neg a\vee \neg c, \neg b\vee \neg c,
%\neg d, \neg e, \neg f\}$.
Let $K$ be a KB with
$\mod(K)=\{\{a,b\},\{b,c,e\},\{a,c,d\}\}$. We observe that $K$ is 
not $\Lkrom$-expressible since $\cl{\Lkrom}(\mod(K))=\mod(K)\cup \{a,b,c\}$.
However, we can give an $\Lkrom$-KB $K'$ using three new atoms
$x,y,z$ to penalize the undesired interpretation $\{a,b,c\}$
such that $\Delta^H_\mu(K')\equiv K$,
with $\mu\in\Lkrom$ of the form $\mod(\mu)=\cl{\Lkrom}(\mod(K))$.
To this end, assume $K'$ with
$\mod(K') = \{ \omega_1,\omega_2,\omega_3,\omega_4\}$
of the form
\begin{eqnarray*}
\omega_1 &=& \{a,b,x,y\}, \\
\omega_2 &=& \{b,c,e,x,z\}, \\
\omega_3 &=& \{a,c,d,y,z\}, \\
\omega_4 &=& \{a,b,c,x,y,z\}.
\end{eqnarray*}
One can verify that $\cl{\Lkrom}(K')=\mod(K')$.
Thus, $K'$ can be picked from $\Lkrom$.
We use $\mu$ such that $\mod(\mu)=\cl{\Lkrom}(\mod(K))$ and get 
distances
$$
\begin{array}{lcccccc}
&
\omega_1 &
\omega_2 &
\omega_3 &
\omega_4 &
\min
%\{a,b,x,y\} 	& 
%\{b,c,e,x,z\} 	&
%\{a,c,d,y,z\} 	&
%\{a,b,c,x,y,z\}	&
\\
\{a,b\} 	& 2 & 5 & 5 & 4 & 2
\\
\{b,c,e\} 	& 4 & 2 & 6 & 4 & 2
\\
\{a,c,d\} 	& 4 & 6 & 2 & 4 & 2
\\
\{a,b,c\}	& 3 & 4 & 4 & 3 & 3
\end{array}
$$
Here, each line gives the distance between a model of $\mu$ and a model of $K'$ ($\omega_i$ columns), or between a model of $\mu$ and $K'$ ($\min$ column).
The key observation is that pairs from $x,y,z$ as used in $\omega_1$, $\omega_2$, $\omega_3$
give minimal distances $2$ while the remaining interpretation $\omega_4$, 
which corresponds to the closure of $K$, contains all three new
atoms (since $maj_3(\{x,y\},\{x,z\},\{y,z\})=\{x,y,z\}$).\hfill$\diamond$
\end{example}

\begin{theorem}\label{theorem:krom-representability}
Any KB 
$K$
%$K\subseteq \L$ 
is $\Lkrom$-simplifiable w.r.t.~$\Delta_{\mu}^H$. 
\end{theorem}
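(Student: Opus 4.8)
The plan is to generalize the construction from the preceding example to an arbitrary KB $K$. Write $\mod(K) = \{m_1, \dots, m_k\}$ and introduce $k$ fresh atoms $X = \{x_1, \dots, x_k\}$, extending the alphabet to $\U \cup X$; these new atoms act only as auxiliary ``penalty'' markers and will be forced false by the constraint. For each model I would attach the \emph{complement of a singleton} as its gadget, setting $\sigma_i = m_i \cup (X \setminus \{x_i\})$, and then take $K'$ to be the $\Lkrom$-KB with $\mod(K') = \cl{\Lkrom}(\{\sigma_1, \dots, \sigma_k\})$. Defining $K'$ as a closure guarantees for free that $K'$ is $\Lkrom$-expressible (by the very definition of a fragment), so the only remaining work is to control what this closure looks like and to compute Hamming distances. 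As constraint I take $\mu$ with $\mod(\mu) = \{v : v \cap X = \emptyset,\ v \cap \U \in \cl{\Lkrom}(\mod(K))\}$, i.e.\ a 2CNF formula for $\cl{\Lkrom}(\mod(K))$ conjoined with the unit clauses $\neg x_1, \dots, \neg x_k$; this lies in $\Lkrom$, and by Proposition~\ref{prop:mu} it is the canonical choice of constraint.

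The heart of the argument is a structural lemma describing $\mod(K')$: \emph{every element of $\cl{\Lkrom}(\{\sigma_i\})$ is either one of the seeds $\sigma_i$ (whose part on $X$ has size $k-1$) or has all of $X$ set to true}. The key observation making this tractable is that $\maj_3$ is computed coordinate-wise and commutes with complementation on $X$: the $X$-complement of each seed is the singleton $\{x_i\}$, and that of a full gadget is $\emptyset$. Since a point survives a majority of three sets of size at most $1$ only when at least two of them already contain it, I would argue by induction over the closure that the $X$-complement of any generated element is either empty (giving the full set $X$) or a single $\{x_p\}$; and in the latter case at least two of the three inputs must equal $\sigma_p$, which forces their original parts, and hence the majority, to be $m_p$. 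Thus the only elements whose $X$-part has size $k-1$ are exactly the seeds $\sigma_i$, each carrying original part $m_i$, while every other element of $K'$ carries the full set $X$.

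With this in hand the distance computation is short. All interpretations in $\mod(\mu)$ have the new atoms false, so for $v \in \mod(\mu)$ the distance to a seed is $\distance{v \cap \U}{m_i} + (k-1)$ and the distance to any full-$X$ element is at least $k$. Hence $\distance{m_i}{K'} = k-1$, attained at $\sigma_i$, whereas for an undesired $w \in \cl{\Lkrom}(\mod(K)) \setminus \mod(K)$ we have $w \neq m_j$ for all $j$, so every distance to a seed is at least $1 + (k-1) = k$ and every distance to a full-$X$ element is at least $k$; thus $\distance{w}{K'} \ge k > k-1$. The global minimum over $\mod(\mu)$ is therefore $k-1$, attained exactly on $\mod(K)$, so $\merge{H}{\mu}{K'} \equiv K$ (the new atoms being false throughout, the result coincides with $K$ on $\U$). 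The cases where $K$ is already $\Lkrom$-expressible — in particular $\mod(K)$ empty or of size at most two — are subsumed by Proposition~\ref{prop:clear}.

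I expect the main obstacle to be the structural lemma rather than the arithmetic: one must ensure the closure never produces a model $m_p$ together with an $X$-part of size $k-1$ other than through the seed $\sigma_p$ itself, since otherwise a spurious interpretation could reach distance $k-1$ and corrupt the merge. The complement-and-majority argument above is precisely what rules this out; note that it does \emph{not} matter whether some $m_p$ additionally appears with the full set $X$ (distance $k$), as such duplicates are harmless for the minimization. A secondary point to state carefully is the use of auxiliary atoms: equivalence with $K$ is meant modulo the fresh variables, which $\mu$ pins to false.
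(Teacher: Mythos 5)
Your proposal is correct and follows essentially the same route as the paper's proof: one fresh atom per model, seeds $m_i \cup (X\setminus\{x_i\})$, $K'$ defined as the $\Lkrom$-closure of the seeds, the constraint $\cl{\Lkrom}(\mod(K))$ with the fresh atoms forced false, and the structural claim that every non-seed element of the closure contains all of $X$, yielding distances $k-1$ for models of $K$ versus at least $k$ otherwise. Your inductive complementation/self-duality argument for the structural lemma is in fact a more careful justification of the step the paper dismisses with ``it follows quite easily.''
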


\begin{proof}
We have to show that 
for any KB $K$, there exists an $\Lkrom$-KB $K'$ and a formula $\mu \in \Lkrom$ such that $\Delta_{\mu}^H(K') \equiv K$.
If $K$ is $\Lkrom$-expressible, the result is due to Proposition~\ref{prop:clear}. So suppose that $K$ is not $\Lkrom$-expressible and
let $\mod(K) = \{\omega_1,\dots,\omega_n\}$.
Consider a set of new atoms $A = \{a_1,\dots,a_n\}$, and for each
$\omega_i \in \mod(K)$, let $\omega_i' = \omega_i \cup A \setminus \{a_i\}$. 
We define the $\Lkrom$-KB $K'$ and $\mu\in\Lkrom$ such that 
\begin{eqnarray*}
\mod(K') & = & \cl{\Lkrom}(\{\omega_i' \mid \omega_i \in \mod(K)\}) \\
\mod(\mu) & = & \cl{\Lkrom}(\mod(K)).
\end{eqnarray*}
Let $\Omega'=\{\omega_i' \mid \omega_i \in \mod(K)\}$.
We first show that for each 
%$\omega\in\mod(K')\setminus\{\omega_i' \mid \omega_i \in \mod(K)\}$,
$\omega\in\mod(K')\setminus\Omega'$,
$A\subseteq \omega$. 
Indeed, for any triple
$\omega_j,\omega_k,\omega_l\in\mod(K)$, such that 
$\omega_{jkl}=\maj_3(\omega_j,\omega_k,\omega_l)\notin\mod(K)$, we observe that
$\maj_3(\omega'_j,\omega'_k,\omega'_l)=
\omega_{jkl}\cup \maj_3(
A\setminus\{a_j\},
A\setminus\{a_k\},
A\setminus\{a_l\})=
\omega_{jkl}\cup A$. 
Thus, for each
$\omega\in\cl{\Lkrom}^1(\Omega')\setminus\Omega'$,
$A\subseteq \omega$. 
Recall that 
$\mod(K')  =  \cl{\Lkrom}(\Omega')$.
It follows quite easily that
each further interpretation
%$\omega\in\mod(K')\setminus\Omega'=
$\omega\in\cl{\Lkrom}(\Omega')\setminus
(\cl{\Lkrom}^1(\Omega')\cup \Omega')$, also satisfies
$A\subseteq \omega$.

This shows that each model of $K'$ contains at least $n-1$ atoms from $A$.
Thus, for every model $\omega_i\in K$, 
$H(\omega_i,K') = H(\omega_i,\omega_i') = n - 1$.
It remains to show that
for each 
$\omega \in \mod(\mu) \setminus \mod(K)$, 
$H(\omega,K') \geq n$. 
First, let $\omega'\in\Omega'$.
Since $\omega \notin \mod(K)$, $\omega'\setminus A\neq\omega$
and since $\omega'$ contains $n-1$ elements from $A$,
we have
$H(\omega,\omega') \geq n$. 
As shown above all other interpretations $\omega''\in
\mod(K')\setminus\Omega'$ contain all $n$ atoms from $A$,
thus
$H(\omega,\omega'') \geq n$, too.
%
%First, we consider the case of an interpretation which  is the majority of three models of $K$; without loss of generality, we consider $\omega_{1,2,3} = \maj_3(\omega_1,\omega_2,\omega_3)$. The distance between $\omega_{1,2,3}$ and any $\omega_i'$ interpretation is strictly larger than $n-1$, since $H(\omega_{1,2,3},\omega_i) \geq 1$, and $\omega_i'$ contains $n-1$ new atoms. $\omega_{1,2,3}' = \maj_3(\omega_1',\omega_2',\omega_3')$ is the majority of $\omega_1',\omega_2'$ and $\omega_3'$, it is equal to $\maj_3(\omega_1,\omega_2,\omega_3) \cap A$, so $H(\omega_{1,2,3},\omega_{1,2,3}') = n$. For any other model $\omega$ of $K'$, $H(\omega_{1,2,3},\omega) \geq n$ because $A \subseteq \omega$.
%
%Finally, we consider a model $\omega_p$ of $\mu$, which stems from the
%recursive application of the $\maj_3$ operator. We know that
%$\omega_p$ is different from any other interpretation $\omega$, so $H(\omega_p,\omega) \geq 1$. Since the
%corresponding interpretation $\omega' \in \mod(K')$ contains (at
%least) $n-1$ new
%atoms from $A$, we know that $H(\omega_p,\omega') \geq 1 + (n-1) =
%n$. Also, $\omega_p' = \omega_p \cup A$, so $H(\omega_p,\omega_p') =
%n$. So we can conclude that $H(\omega_p,\mod(K')) \geq n$.
\end{proof}

As an immediate consequence, we obtain that any KB $K$ is $\Lkrom$-distributable w.r.t.\ $\Delta^{H,\otimes}$ for \emph{any} aggregation function $\otimes$. 
Note that this result is in strong contrast to the $\Llits$ fragment, 
where only $\Llits$-expressible KBs are $\Llits$-distributable w.r.t.\ $\Delta^{H,\sum}$. 

\subsection{The Horn-Fragment}

We now turn our attention to the $\Lhorn$ fragment. 
%with
%the Hamming distance-based operators. 
Recall Example~\ref{ex:1} where we 
have shown how to distribute some non $\Lhorn$-expressible KB 
using a profile over two $\Lhorn$-KBs.
Our first result shows that in this example case we cannot reduce to profiles of a single KB, i.e.\ that there are KBs which are
$\Lhorn$-distributable but not $\Lhorn$-simplifiable.

%However, our next result is a negative one and shows that already
%knowledge bases with three models are not necessarily Horn-simplifiable
%w.r.t.\ Hamming-distance.

\begin{proposition}\label{prop:neg}
Given  a KB $K$ with $\mod(K)=\{\omega_1,\omega_2,\omega_3\}$, where $\omega_3=\omega_1\cup \omega_2$, $H(\omega_1, \omega_2)=2$ and $\omega_1,\omega_2$ are incomparable. Then $K$ is not $\Lhorn$-simplifiable w.r.t. $\Delta^H$.
\end{proposition}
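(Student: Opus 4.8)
The plan is to apply Proposition~\ref{prop:mu} to fix the constraint and then exploit the defining feature of the $\Lhorn$ fragment---closure under intersection---to force a contradiction. First I would set $\omega_0 = \omega_1 \cap \omega_2$ and note that, since $\omega_1,\omega_2$ are incomparable with $H(\omega_1,\omega_2)=2$, there are distinct atoms $p \in \omega_1\setminus\omega_2$ and $q \in \omega_2\setminus\omega_1$, so that $\omega_1 = \omega_0\cup\{p\}$, $\omega_2=\omega_0\cup\{q\}$, $\omega_3 = \omega_1\cup\omega_2 = \omega_0\cup\{p,q\}$, and the four interpretations agree on every atom outside $\{p,q\}$, forming a Boolean square with bottom $\omega_0$ and top $\omega_3$. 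Closing $\mod(K)$ under intersection therefore adds exactly $\omega_0$, i.e.\ $\cl{\Lhorn}(\mod(K)) = \{\omega_0,\omega_1,\omega_2,\omega_3\}$, and $\omega_0\notin\mod(K)$ witnesses that $K$ is not $\Lhorn$-expressible. By Proposition~\ref{prop:mu} it then suffices to refute $\Delta^H_\mu(K')\equiv K$ for the single $\mu$ with $\mod(\mu)=\{\omega_0,\omega_1,\omega_2,\omega_3\}$.

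Next I would argue by contradiction: assume some $\Lhorn$-KB $K'$ satisfies $\Delta^H_\mu(K')\equiv K$. Unfolding Definition~\ref{def:icm}, this says that $\omega_1,\omega_2,\omega_3$ are precisely the elements of $\mod(\mu)$ minimising distance to $K'$; writing $d$ for this minimal value, we have $H(\omega_1,K')=H(\omega_2,K')=H(\omega_3,K')=d$ while $H(\omega_0,K')>d$. The Horn-specific ingredient enters here: because $\mod(K')$ is closed under intersection, if $\sigma_1,\sigma_2\in\mod(K')$ are models nearest to $\omega_1$ and $\omega_2$ (so $H(\omega_i,\sigma_i)=d$), then $\sigma_1\cap\sigma_2$ is again a model of $K'$. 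The aim is to show $H(\omega_0,\sigma_1\cap\sigma_2)\le d$, which contradicts $H(\omega_0,K')>d$ and closes the argument.

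The crux---and the step I expect to be the main obstacle---is controlling how Hamming distance behaves under this intersection. The tool I would use is the per-target identity $H(\omega_1,\sigma)+H(\omega_2,\sigma)=H(\omega_0,\sigma)+H(\omega_3,\sigma)$, valid for every $\sigma$ since $\omega_0,\dots,\omega_3$ differ only on $\{p,q\}$ and hence contribute equally on all remaining atoms. Feeding the nearest points into this identity, together with $H(\omega_0,\cdot)>d$ and $H(\omega_3,\cdot)\ge d$, I would first show that a nearest point to $\omega_1$ may be taken to agree with $\omega_1$ on the two distinguishing atoms (i.e.\ $p\in\sigma_1$, $q\notin\sigma_1$), and symmetrically for $\sigma_2$. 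Then $\sigma_1\cap\sigma_2$ contains neither $p$ nor $q$, so it agrees with $\omega_0$ there, and a count of the remaining coordinates should bound $H(\omega_0,\sigma_1\cap\sigma_2)$ by $d$. This bookkeeping is the delicate part, precisely because intersecting can push an interpretation \emph{away} from $\omega_0$ on the atoms shared by $\omega_1$ and $\omega_2$; making the count yield a strict contradiction is the heart of the proof, while the reduction and set-up above are routine.
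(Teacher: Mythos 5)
Your plan reproduces the paper's own argument almost line for line: reduce to the canonical constraint $\mu$ with $\mod(\mu)=\cl{\Lhorn}(\mod(K))$ via Proposition~\ref{prop:mu}, take nearest models $\sigma_1,\sigma_2\in\mod(K')$ to $\omega_1,\omega_2$, force $p\in\sigma_1$, $q\notin\sigma_1$, $q\in\sigma_2$, $p\notin\sigma_2$ (this part of your reasoning is correct and identical to the paper's), use Horn-closedness to get $\sigma_1\cap\sigma_2\in\mod(K')$, and then bound $H(\omega_0,\sigma_1\cap\sigma_2)\le d$. The gap is exactly the step you defer as ``delicate bookkeeping'': that bound does not follow from the constraints you have, because it is false. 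Writing $A=\sigma_1\setminus\{p\}$ and $B=\sigma_2\setminus\{q\}$, the bound amounts to $|\omega_0\,\triangle\,(A\cap B)|\le d$ given $|\omega_0\,\triangle\,A|=|\omega_0\,\triangle\,B|=d$, and this fails whenever $A$ and $B$ omit \emph{different} atoms of $\omega_0$: with $\omega_0=\{x,y\}$, $\sigma_1=\{p,x\}$, $\sigma_2=\{q,y\}$ (so $d=1$), every condition you derived holds, yet $\sigma_1\cap\sigma_2=\emptyset$ and $H(\omega_0,\emptyset)=2>d$. Your per-target identity, while true, cannot rescue this: it expresses $H(\omega_0,\sigma)$ as $H(\omega_1,\sigma)+H(\omega_2,\sigma)-H(\omega_3,\sigma)$, and for $\sigma=\sigma_1\cap\sigma_2$ you only have \emph{lower} bounds on the first two terms, so it never yields the needed \emph{upper} bound.

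Worse, the gap cannot be patched by cleverer counting, because the statement itself fails once $|\omega_1\cap\omega_2|\ge 2$. Over $\U=\{p,q,x,y\}$ take $\mod(K)=\{\{p,x,y\},\{q,x,y\},\{p,q,x,y\}\}$, $\mu=x\wedge y$ (so $\mod(\mu)=\cl{\Lhorn}(\mod(K))$), and the $\Lhorn$-KB $K'$ with $\mod(K')=\{\emptyset,\{q\},\{p,x\},\{q,y\},\{p,q,x\}\}$; this set is intersection-closed, hence Horn-expressible. A direct check gives $H(\omega_1,K')=H(\omega_2,K')=H(\omega_3,K')=1$ while $H(\omega_0,K')=2$, so $\Delta^H_\mu(K')\equiv K$ and this $K$ \emph{is} $\Lhorn$-simplifiable. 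For comparison, the paper's proof runs exactly your argument and simply asserts the missing inequality, $H(\omega,\omega_1'\cap\omega_2')\le H(\omega_1,\omega_1')$, without justification; that inequality (and with it the proposition) is valid only when $\omega_1\cap\omega_2$ contains at most one atom --- in particular in the motivating Example~\ref{ex:1}, where $\omega_1\cap\omega_2=\emptyset$ and the bound reduces to $|A\cap B|\le|A|$. So your instinct about where the difficulty sits was exactly right, but neither your sketch nor the paper's proof closes it, and in the stated generality it cannot be closed.
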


\begin{proof}
The situation described in the Proposition corresponds to
$K = \{\omega \cup \{a\},\omega \cup \{b\}, \omega \cup \{a,b\}\}$ with $\omega$ some interpretation which does not contain $a$ or $b$.  We need $\mod(\mu)=\{\omega,\omega \cup \{a\},\omega \cup \{b\}, \omega \cup \{a,b\}\}$, as required by Proposition~\ref{prop:mu}.
We want to identify a $\Lhorn$-KB $K'$ such that $\Delta_\mu^H(K') \equiv K$. This means that $\omega$ is the single model of $\mu$ which is not minimal w.r.t. the Hamming distance. Let $\omega_1'$ be the model in $K'$ closest to $\omega_1 = \omega \cup \{a\}$ and $\omega_2'$ the one closest to $\omega_2 = \omega \cup \{b\}$. We need $a\in \omega_1'$ and $b\in \omega_2'$; otherwise $H(\omega,\omega_1')<H(\omega_1,\omega_1')$ or $H(\omega,\omega_2')<H(\omega_2,\omega_2')$;
further we need $b\notin \omega_1'$ and $a\notin \omega_2'$;
otherwise $H(\omega_3,\omega_1')<H(\omega_1,\omega_1')$ or $H(\omega_3,\omega_2')<H(\omega_2,\omega_2')$.
Hence $\omega_1'$ and $\omega_2'$ are incomparable thus also $\omega_1'\cap \omega_2' \in \mod(K')$, since $K'$ is a Horn KB. But then $H(\omega,\omega_1'\cap \omega_2') \leq H(\omega_1,\omega_1')$.
\end{proof}

Our next result shows that
$\Delta^H$ nonetheless increases the range of $\Lhorn$-simplifiable KBs compared to $\Delta^D$ (recall Theorem~\ref{thm:d}).

\begin{proposition}\label{thm:twomods}
%With Hamming distance we can do any pair of interpretations
%(even if not closed). Formally, 
Any knowledge base $K$ with $\mod(K)=\{\omega_1,\omega_2\}$
is $\Lhorn$-simplifiable w.r.t.\ $\Delta^{H}$.
\end{proposition}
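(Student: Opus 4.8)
The plan is to split on whether $\omega_1$ and $\omega_2$ are comparable. If they are (say $\omega_1 \subseteq \omega_2$), then $\{\omega_1,\omega_2\}$ is already closed under intersection, so $K$ is $\Lhorn$-expressible and the claim is immediate from Proposition~\ref{prop:clear}. The substantive case is when $\omega_1,\omega_2$ are incomparable: then $\cl{\Lhorn}(\mod(K)) = \{\omega_1,\omega_2,\omega_1\cap\omega_2\}$, and by Proposition~\ref{prop:mu} it suffices to produce an $\Lhorn$-KB $K'$ with $\Delta^H_\mu(K')\equiv K$ for the fixed $\mu$ satisfying $\mod(\mu) = \{\omega_1,\omega_2,\omega_1\cap\omega_2\}$. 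Hence the entire problem reduces to forcing the single extra interpretation $\omega_1\cap\omega_2$ to be strictly dominated by $\omega_1$ and $\omega_2$.

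Write $Q_1 = \omega_1\setminus\omega_2$ and $Q_2 = \omega_2\setminus\omega_1$, both nonempty by incomparability, and assume without loss of generality that $|Q_1|\le |Q_2|$. The construction I would use keeps $\mod(K')$ a two-element chain, which is automatically closed under intersection and therefore $\Lhorn$-expressible. Concretely, I would take $\mod(K') = \{M_1,M_2\}$ with $M_2 = \omega_1\cup\omega_2$ and $M_1 = \omega_1 \cup T$, where $T\subseteq Q_2$ is any subset with $|T| = |Q_1|$. Since $M_1\subseteq M_2$, this is a genuine Horn KB (when $|Q_1|=|Q_2|$ the chain degenerates to the single model $\omega_1\cup\omega_2$, which is still fine). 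Intuitively, $M_2$ pulls $\omega_2$ down while $M_1$ pulls $\omega_1$ down, and both sit strictly above $\omega_1\cap\omega_2$.

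The remaining step is a direct Hamming-distance computation. I would check $H(\omega_1,M_1)=|Q_1|$ and $H(\omega_2,M_2)=|Q_1|$ (the latter using $|T|=|Q_1|$, so that $M_2$ differs from $\omega_2$ exactly on $Q_1$), while the cross-distances satisfy $H(\omega_1,M_2)=H(\omega_2,M_1)=|Q_2|\ge |Q_1|$; hence $H(\omega_1,K')=H(\omega_2,K')=|Q_1|$. For the forbidden interpretation I would compute $H(\omega_1\cap\omega_2,M_1)=2|Q_1|$ and $H(\omega_1\cap\omega_2,M_2)=|Q_1|+|Q_2|$, so $H(\omega_1\cap\omega_2,K')=2|Q_1|$. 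Since incomparability forces $|Q_1|\ge 1$, we get $2|Q_1| > |Q_1|$, so $\omega_1\cap\omega_2$ is strictly dominated among $\mod(\mu)$ and $\Delta^H_\mu(K')\equiv K$, as required.

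The crux I expect — and the reason the superficially similar configuration in Proposition~\ref{prop:neg} fails — is the tension created by closure under intersection: any Horn KB whose models approach both $\omega_1$ and $\omega_2$ must also contain their intersection, which naturally drifts toward $\omega_1\cap\omega_2$. The idea that makes the plan succeed is to let $M_1$ \emph{borrow} exactly $|Q_1|$ atoms of $Q_2$ rather than merely padding $\omega_1$, so that the chain's bottom element keeps $\omega_1\cap\omega_2$ at distance $2|Q_1|$, strictly above the distance $|Q_1|$ shared by $\omega_1$ and $\omega_2$. Before concluding I would verify that no fresh atoms are needed and that $M_1$ and $M_2$ genuinely realize the claimed minima over $\mod(\mu)$.
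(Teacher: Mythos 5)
Your proposal is correct and matches the paper's own proof essentially step for step: the same case split via Proposition~\ref{prop:clear}, the same constraint $\mu$ with $\mod(\mu)=\{\omega_1,\omega_2,\omega_1\cap\omega_2\}$, and the same two-model chain $K'$ (your $M_1=\omega_1\cup T$ is exactly the paper's $\omega_1^+$, and $M_2=\omega_1\cup\omega_2$), with identical distance computations yielding $d_1,d_1,2d_1$. The only cosmetic slip is the parenthetical claiming $H(\omega_2,M_2)=|Q_1|$ ``uses $|T|=|Q_1|$'' --- that distance is $|Q_1|$ regardless of $T$ --- but this does not affect the argument.
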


\begin{proof}
  If $\omega_1,\omega_2$ are comparable, %then there exists a
  %$K'\in\Lhorn$ such that $K'\equiv K$ and 
	we can apply
  Proposition~\ref{prop:clear}.  Thus, assume $\omega_1,\omega_2$ are
  incomparable and let $d_1=|\omega_1\setminus \omega_2|$ and
  $d_2=|\omega_2\setminus \omega_1|$. W.l.o.g.\ assume $d_1\leq
  d_2$. Also note that $d_1 > 0$.  We use $K'$ with
  $\mod(K')=\{\omega_1^+,\omega_1\cup \omega_2\}$ where $\omega_1^+$
  adds $d_1$ elements from $\omega_2\setminus \omega_1$ to $\omega_1$.
  Thus, $\omega_1^+\subseteq \omega_1\cup \omega_2$ and we can choose
  $K'$ from $\Lhorn$.  Moreover, let $\mu\in\Lhorn$ such that
  $\mod(\mu)=\{\omega_1,\omega_2,\omega_1\cap \omega_2\}$.  We have
  the following distances (note that $d(\omega_2,\omega_1^+)=d_1+(d_2-d_1)$).
$$
\begin{array}{lccc}
           &     \omega_1^+      &        \omega_1\cup \omega_2 &    K'\\
\omega_1          &     d_1      &        d_2      &    d_1 \\
\omega_2          &     d_2  &         d_1     &     d_1\\
\omega_1\cap \omega_2    &     2d_1    &        d_1+d_2    &    > d_1\\
\end{array}
$$
Hence, $\Delta^{H}_\mu(K') \equiv K$ as desired.
\end{proof}

Our final result concerns distributability in the $\horn$ fragment. 
We show that some KBs with three models can be distributed.

\begin{proposition}
Let $K$ be a KB such that $\mod(K) = \{\omega_1,\omega_2,\omega_3\}$. If $\omega_1$, $\omega_2$ and $\omega_3$ are not all pairwise incomparable, then $K$ is $\horn$-distributable w.r.t. $\Delta^{H,\otimes}$ with $\otimes \in \{\sum,\GMax,\GMin\}$.
\end{proposition}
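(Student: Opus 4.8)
The plan is to dispose of the easy situations first and then reduce the proposition to a small, explicit case analysis driven by the comparability hypothesis. If $K$ is $\Lhorn$-expressible we are done by Proposition~\ref{prop:clear}, so assume it is not. By Proposition~\ref{prop:mu} I may fix once and for all a constraint $\mu\in\Lhorn$ with $\mod(\mu)=\cl{\Lhorn}(\mod(K))$, and it then suffices to build, for each $\otimes\in\{\sum,\GMax,\GMin\}$, an $\Lhorn$-profile $E$ whose induced preorder $\leq^{H,\otimes}_E$ has exactly $\{\omega_1,\omega_2,\omega_3\}$ as the minimal elements of $\mod(\mu)$.

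First I would use the hypothesis to pin down the shape of $\cl{\Lhorn}(\mod(K))$. Since the three models are not pairwise incomparable, at least one pair, say $\omega_i\subseteq\omega_j$, is comparable, so $\omega_i\cap\omega_j=\omega_i\in\mod(K)$ and that pair contributes no new interpretation. A short inspection of the Hasse diagram then leaves only a few cases: a chain (which is intersection-closed, hence $\Lhorn$-expressible and already handled); a $\wedge$-shape with a common upper bound ($\omega_1,\omega_2\subset\omega_3$, $\omega_1\|\omega_2$), as in Example~\ref{ex:1}; a $\vee$-shape with a common lower bound; and the case of a single comparable pair with the third model incomparable to both. In the first two non-trivial shapes the closure adds a single spurious interpretation (an intersection), while in the last it adds two. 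The comparable pair is exactly what keeps this number small and gives the construction an ``anchor'', in contrast with the fully incomparable case.

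The construction then follows the template of Example~\ref{ex:1}: put the extremal model(s) of the diagram into singleton $\Lhorn$-KBs and collect the remaining model(s) together with the spurious intersection point(s) into one intersection-closed (hence $\Lhorn$) KB. By design each spurious point is a model of the grouping KB, so it has distance $0$ there, yet it sits strictly farther from the singleton KB(s) than the genuine models do; this is what ultimately penalises it. The remaining task is purely numerical: the raw Hamming distances from, e.g., $\omega_3$ to $\omega_1$ and to $\omega_2$ are in general different ($|\omega_3\setminus\omega_1|\neq|\omega_3\setminus\omega_2|$), so the three models need not be tied at the minimum. I would balance them by padding the profile with auxiliary atoms (exactly as in the $\krom$ construction, where a model over new atoms shifts every original distance by a controlled amount) and, where needed, with additional singleton KBs, chosen so that all three models receive a common aggregated value. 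One then reads off the verification from a small distance table, separately for each aggregation: equality of the three totals for $\sum$, and equality of the three sorted distance vectors for $\GMax$ and $\GMin$, together with strict domination of every spurious point.

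The hard part will be this balancing step, and it is the reason the statement is restricted away from the fully incomparable case. For $\sum$, a uniform offset added through auxiliary atoms cancels across $\mod(\mu)$, so tying the three models requires genuinely asymmetric padding, and one must check that this padding does not accidentally pull a spurious point down to the minimum; Proposition~\ref{prop:neg} shows that no single KB can achieve this, so a multi-KB profile is unavoidable. The most delicate configuration is the single-comparable-pair case, where two spurious intersection points appear and must both be kept strictly above the tied minimum while the three models stay balanced simultaneously for all of $\sum$, $\GMax$ and $\GMin$. I expect the bulk of the proof to consist of exhibiting, case by case, the explicit padded $\Lhorn$-profiles and their distance tables, in the spirit of Example~\ref{ex:1} and Proposition~\ref{thm:twomods}.
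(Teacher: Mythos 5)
Your skeleton matches the paper's: dispose of the expressible case via Proposition~\ref{prop:clear}, fix $\mu$ with $\mod(\mu)=\cl{\Lhorn}(\mod(K))$ via Proposition~\ref{prop:mu}, split into the same three shapes (common upper bound, common lower bound, single comparable pair), and verify explicit profiles by distance tables for all of $\sum$, $\GMax$, $\GMin$. But the constructive core is deferred ("I expect the bulk of the proof to consist of exhibiting \dots the explicit padded profiles"), and the template you commit to --- singleton KBs for the extremal model(s) plus one intersection-closed grouping KB containing the remaining models \emph{and} the spurious intersection(s) --- demonstrably fails in the common-lower-bound case. There, with $\omega_3\subset\omega_0:=\omega_1\cap\omega_2$ and $\omega_1,\omega_2$ incomparable, the spurious point $\omega_0$ is \emph{closer} to each singleton KB than the other genuine models are: $H(\omega_0,\omega_1)=|\omega_1\setminus\omega_2|$, which is strictly less than both $H(\omega_2,\omega_1)$ and $H(\omega_3,\omega_1)$, and $\omega_0$ has distance $0$ to the grouping KB. Summing over the profile $(K_{\omega_1},K_{\omega_2},G)$ with $\mod(G)=\{\omega_3,\omega_0\}$, the point $\omega_0$ gets total $|\omega_1\setminus\omega_2|+|\omega_2\setminus\omega_1|$, strictly below $\omega_2$'s total $|\omega_1\setminus\omega_2|+2|\omega_2\setminus\omega_1|$ (and below $\omega_1$'s and $\omega_3$'s as well), so the spurious point is the unique minimum and $\Delta_\mu(E)\not\equiv K$; the same happens for $\GMax$ and $\GMin$. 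Your claim that each spurious point "sits strictly farther from the singleton KB(s) than the genuine models do" is false precisely in this case --- which is the one (the paper's Case~1.2) requiring the most elaborate profile.

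Even in the common-upper-bound case your template ties the three models only when $|\omega_3\setminus\omega_1|=|\omega_3\setminus\omega_2|$ (as in Example~\ref{ex:1}); otherwise one genuine model is strictly dominated, and the proposed repair is weaker than you suggest: padding a singleton KB with fresh atoms adds the \emph{same} offset to every interpretation's distance to that KB, so it cannot repair asymmetries, and for $\GMax$/$\GMin$ equal sums do not suffice --- the sorted distance vectors themselves must tie. The paper avoids all balancing with a different device: each KB in the profile has as its models one-atom-shifted copies of \emph{every} interpretation in $\mod(\mu)$, with shifts drawn from $a\in\omega_1\setminus\omega_2$, $b\in\omega_2\setminus\omega_1$ and a globally new atom $c$, chosen so that each KB remains closed under intersection. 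This makes every relevant distance $0$ or $1$ (and $\geq 2$, or $1$ in strictly more KBs, for spurious points), independently of the raw Hamming geometry; the genuine models then receive permutations of the same $\{0,1\}$-vector, which settles $\sum$, $\GMax$ and $\GMin$ simultaneously. Without that device, or a worked-out substitute for it, your proposal has a genuine gap.
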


\begin{proof}
  If $K$ is $\horn$-expressible, then the result follows from Proposition~\ref{prop:clear}.
  If $K$ is not $\horn$-expressible, then we do a case analysis on the number of pairwise incomparable models of $K$.

	\textit{Case 1.} 
	If exactly one pair of models of $K$ are incomparable, then we can assume without loss of generality that it is $\omega_1$ and $\omega_2$. 
	It follows then that $\omega_3\neq\omega_1\cap\omega_2$. 
	Also, there must be distinct atoms $a$ and $b$ such that $a\in\omega_1$, $a\notin\omega_2$ and $b\in\omega_2$, $b\notin\omega_1$. 
	We consider a constraint $\mu\in \Lhorn$ such that $\mod(\mu)=\{\omega_1, \omega_2, \omega_3, \omega_1\cap \omega_2\}$.
	
	\textit{Case 1.1.} If $\omega_1 \subseteq w_3$ and $w_2\subseteq \omega_3$, then we take a globally new atom $c$ and KBs $K_1$ and $K_2$ such that:
	\begin{itemize}
	  \item $\mod(K_1)= \{\omega_1\cup \{b\},\omega_2 \cup \{a\},\omega_3, (\omega_1\cap\omega_2)\cup\{a,b\}\}$
	  \item $\mod(K_2)= \{\omega_1,\omega_2, \omega_3\cup\{c\},\omega_1\cap \omega_2\}$	
	\end{itemize}
	It is easy to see that $K_1$ and $K_2$ are $\horn$-expressible. 
	Considering, now, the profile $E=(K_1,K_2)$, we obtain the following distances:
		$$
		\begin{array}{llllll}
		                        &	K_1	&		K_2 & \sum & \GMax  & \GMin\\
		\omega_1        	&	1	&		0   &  1 & (1,0) & (0,1)\\
		\omega_2        	&	1	&		0   &  1 & (1,0) & (0,1)\\
		\omega_3	        &	0	&		1   &   1 & (1,0) & (0,1)\\
		\omega_1\cap \omega_2	&	 2	&	0    & 2    & (2,0) & (0,2)\\
		\end{array}
		$$	
	So for each $\oplus \in \{\sum,GMax,GMin\}$ we obtain that
	$\Delta_\mu^{H,\oplus}(E) \equiv K$.

	\textit{Case 1.2.} If $\omega_3 \subseteq \omega_1$ and $\omega_3 \subseteq \omega_2$, then $\omega_3 \subseteq \omega_1 \cap \omega_2$.
	Moreover, since $\omega_3\neq\omega_1\cap\omega_2$, it actually holds that $\omega_3 \subset \omega_1 \cap \omega_2$.
	Thus there exists an atom $c$ such that $c \in (\omega_1 \cap \omega_2)$ and $c \notin \omega_3$.
	We now take KBs $K_1$, $K_2$, $K_3$ and $K_4$ such that:
	\begin{itemize}
	  \item $\mod(K_1)= \{\omega_1\cup \{b\},\omega_2 \cup \{a\},\omega_3, (\omega_1\cap \omega_2)\backslash\{c\}, (\omega_1\cap\omega_2)\cup\{a,b\}\}$
	  \item $\mod(K_2)= \{\omega_1,\omega_2 \cup\{a\}, \omega_3, (\omega_1\cap \omega_2)\cup \{a\}\}$	
	  \item $\mod(K_3)= \{\omega_1\cup\{b\},\omega_2, \omega_3,(\omega_1\cap \omega_2)\cup\{b\}\}$	
	  \item $\mod(K_4)= \{\omega_1,\omega_2, \omega_3\cup\{c\},\omega_1\cap \omega_2\}$	
	\end{itemize}
	It is easy to see that $K_1$, $K_2$, $K_3$ and $K_4$ are $\horn$-expressible. 
	Considering, now, the profile $E=(K_1,K_2,K_3,K_4,K_4)$, we obtain the following distances:
		
		\begin{table}[h]		
		\centering
		\scalebox{0.82}{
		\begin{tabular}{lllllllll}
											                        & $K_1$ & $K_2$ & $K_3$ & $K_4$ & $K_4$ & $\sum$ & $\GMax$  & $\GMin$\\
		$\omega_1$        							 &	1	  &	   0    &   1    &     0   &     0    &     2       & (1,1,0,0,0) & (0,0,0,1,1)\\
		$\omega_2$        							 &	1	  &	   1    &   0    &     0   &     0   &      2       & (1,1,0,0,0) & (0,0,0,1,1)\\
		$\omega_3$	        						 &	0	  &	   0    &   0    &     1    &    1    &     2       &  (1,1,0,0,0)  &(0,0,0,1,1)\\
		$\omega_1\cap \omega_2$	&	1 	 &    1     &  1      &   0     &	0    &     3       & (1,1,1,0,0) & (0,0,1,1,1)\\
		\end{tabular}
		}
		\end{table}
		
	So for each $\oplus \in \{\sum,GMax,GMin\}$ we obtain that
	$\Delta_\mu^{H,\oplus}(E) \equiv K$.	

	\textit{Case 2.}
	If exactly two pairs of models of $K$ are incomparable, then we can assume without loss of generality that it is $w_1, w_2$ and $w_2, w_3$.
	We consider a constraint $\mu\in \Lhorn$ such that $\mod(\mu)=\{\omega_1, \omega_2, \omega_3, \omega_1\cap \omega_2, \omega_2\cap\omega_3\}$.
	Then there must be distinct atoms $a$ and $b$ such that $a\in \omega_1$, $a\notin\omega_2$ and $b\in\omega_2$, $b\notin\omega_1$.
	Further, there must be distinct atoms $c$ and $d$ such that $c\in \omega_2$, $c\notin\omega_3$ and $d\in\omega_3$, $d\notin\omega_2$.		
	
	\textit{Case 2.1.} If $w_1\subseteq w_3$, then we get that $c\notin\omega_1$ and $a\in\omega_3$.
	We take KBs $K_1$ and $K_2$ such that:
	\begin{itemize}
	  \item $\mod(K_1)= \{\omega_1\cup \{c\},\omega_2,\omega_3 \cup\{c\}, (\omega_1\cap\omega_2)\cup\{c\},(\omega_2\cap\omega_3) \cup \{c\}\}$
	  \item $\mod(K_2)= \{\omega_1,\omega_2\cup \{a\},\omega_3, (\omega_1\cap\omega_2)\cup\{a\}, (\omega_2\cap\omega_3)\cup\{a\}\}$	
	\end{itemize}
	It is easy to see that $K_1$ and $K_2$ are $\horn$-expressible. 
	Considering, now, the profile $E=(K_1,K_2)$ and keeping in mind that $c\notin\omega_1$ and $a\in\omega_3$, we obtain the following distances:
		$$
		\begin{array}{llllll}
		                        &	K_1	&		K_2 &  \sum & \GMax  & \GMin\\
		\omega_1        	&	1	&		0   &  1 & (1,0) & (0,1)\\
		\omega_2        	&	0	&		1   &  1 & (1,0) & (0,1)\\
		\omega_3	        &	1	&		0   &   1 & (1,0) & (0,1)\\
		\omega_1\cap \omega_2	&	1	&	1 &	 2    & (1,1) & (1,1)\\
		\omega_2\cap \omega_3	&	1	&	1 &	 2    & (1,1) & (1,1)\\
		\end{array}
		$$	
	So for each $\oplus \in \{\sum,GMax,GMin\}$ we obtain that
	$\Delta_\mu^{H,\oplus}(E) \equiv K$.

	\textit{Case 2.2.} If $w_3\subseteq w_1$, then we get that $b\notin\omega_3$ and $d\in\omega_1$.
	We take KBs $K_1$, $K_2$ and such that:
	\begin{itemize}
	  \item $\mod(K_1)= \{\omega_1\cup \{b\},\omega_2,\omega_3 \cup\{b\}, (\omega_1\cap\omega_2)\cup\{b\},(\omega_2\cap\omega_3) \cup \{b\}\}$
	  \item $\mod(K_2)= \{\omega_1,\omega_2\cup \{d\},\omega_3, (\omega_1\cap\omega_2)\cup\{d\}, (\omega_2\cap\omega_3)\cup\{d\}\}$	
	\end{itemize}
	It is easy to see that $K_1$ and $K_2$ are $\horn$-expressible. 
	Considering, now, the profile $E=(K_1,K_2)$ and keeping in mind that $b\notin\omega_3$ and $d\in\omega_1$, we obtain the following distances:
		$$
		\begin{array}{llllll}
		                        &	K_1	&		K_2 &  \sum & \GMax  & \GMin\\
		\omega_1        	&	1	&		0   &  1 & (1,0) & (0,1)\\
		\omega_2        	&	0	&		1   &  1 & (1,0) & (0,1)\\
		\omega_3	        &	1	&		0   &   1 & (1,0) & (0,1)\\
		\omega_1\cap \omega_2	&	1	&	1 &	 2    & (1,1) & (1,1)\\
		\omega_2\cap \omega_3	&	1	&	1 &	 2    & (1,1) & (1,1)\\
		\end{array}
		$$	
	So for each $\oplus \in \{\sum,GMax,GMin\}$ we obtain that
	$\Delta_\mu^{H,\oplus}(E) \equiv K$.
	The cases when $\omega_2\subseteq \omega_3$ or $\omega_3\subseteq \omega_2$ are symmetric.
This concludes our case analysis, as any other remaining case results in either all of the interpretations $\omega_1$, $\omega_2$ and $\omega_3$ being pairwise incomparable, or in $K$ being $\horn$-expressible.
\end{proof}

The remaining case (i.e., $\mod(K)=\{\omega_1, \omega_2,\omega_3\}$ with $\omega_1$, $\omega_2$, $\omega_3$ pairwise incomparable), as well as the more general case when $K$ has an arbitrary number of models is subject to ongoing work.

\section{Conclusion}
In this paper we have proposed the notion of distributability and we have studied the properties of several merging operators with respect to different fragments of propositional logic. Our results are summarized in Table~\ref{table:summary}.
\begin{table}[t]
\centering
\begin{tabular}{|c|c|c|c|}
  \hline
  & $\lits$ & $\krom$ & $\horn$ \\ \hline
  simplifiable w.r.t.\ $\Delta^D$ & $\times$ & $\times$ & $\times$ \\ \hline
  simplifiable w.r.t.\ $\Delta^H$ & $\times$ & $\checkmark$ & $\circ$ \\ \hline
  distributable w.r.t.\ $\Delta^{D,\sum}$ & $\checkmark$ & $\checkmark$ & $\checkmark$ \\ \hline
  distributable w.r.t.\ $\Delta^{H,\sum}$ & $\times$ & $\checkmark$ & $-$ \\ \hline
  distributable w.r.t.\ $\Delta^{H,GMax}$  & $-$ & $\checkmark$ & $-$ \\ \hline
  distributable w.r.t.\ $\Delta^{H,GMin}$ & $-$ & $\checkmark$ & $-$ \\ \hline
\end{tabular}
\caption{Summary of Results \label{table:summary}}
\end{table}
The symbol $\times$ means that only ``trivial'' knowledge bases (belonging to the considered fragment) can be distributed with the corresponding operator. 
Alternately, $\checkmark$ means that any knowledge base can be distributed. 
Symbol $-$ means we know that some non-trivial knowledge bases can be distributed, and finally $\circ$ means that some, but not all, non-trivial bases can be simplified. 
Interestingly, the picture emerging from Table~\ref{table:summary} is that merging operators behave quite differently depending on the distance and aggregation function employed, in a way that does not lend itself to simple categorization.
For instance, our results on simplifiability imply that using Dalal revision to $\Llits$ KBs never takes us outside the $\lits$ fragment; applying the same revision operator to $\Lkrom$ KBs can produce any KB in $\L$; and applying it to $\Lhorn$ KBs can produce some, though not all possible KBs.

%We see that different combinations of operators and fragments lead to 
%%surprisingly 
%significantly
%different results.

Several questions are still open for future work. We plan to study the exact characterization of what can (and cannot) be distributed, in order to replace the symbols $-$ and $\circ$ in the previous table. Other merging operators
can also be integrated to our study. Some of our results on distributability require the addition of new atoms to the interpretations. We want to  determine whether similar results can be obtained without modifying the set of propositional variables, in particular for the $\krom$ fragment.
We are also interested in the number of knowledge bases needed to
distribute knowledge: given an integer $n$, a knowledge base $K$ and a
merging operator $\Delta$, is it possible to distribute $K$
w.r.t. $\Delta$ such that the resulting profile contains at most $n$
knowledge bases?
This paper was a first step to understand the limits of distributability; the actual construction of the profile and complexity of this process are important questions that will be tackled in future research.
Finally, we also consider applying the concept of distributability to
non-classical formalisms, in particular in connection with merging 
operators proposed for logic programs  \cite{DelgrandeSTW13}.

%%%%%%%%%%%%%%%%%%%%%%%%%%%% If accepted, uncomment following lines
\section*{ Acknowledgments}
This work was supported by the Austrian Science Fund (FWF) %project
%This work benefited from the support the Austrian Science Fund (FWF) 
under grant 
P25521.

%%%%%%%%%%%%%%%%%%%%%%%%%%%% Bibliography
% \bibliographystyle{aaai}
% \bibliography{biblio}

\end{document}